\theoremstyle{definition}
\newtheorem{defn}{Definition}[section]
\newtheorem{thm}{Theorem}[section]
\newtheorem{lem}[thm]{Lemma}
\theoremstyle{remark}
\newtheorem{remark}{Remark}[section]
\author{Ilana Segall and Alfred M. Bruckstein\thanks{This research was in part supported by Technion Autonomous Systems Program (TASP)}}  \date{Center for Intelligent Systems\\MultiAgent Robotic Systems (MARS) Lab\\Computer Science Department\\
Technion, Haifa 32000, Israel\\
\today}
\title{Broadcast Guidance of Agents in Deviated Linear Cyclic Pursuit  }
 \newtheorem{theorem}{\textit{Theorem}}
 \newtheorem{lemma}{\textit{Lemma}}
\newtheorem{corollary}{\textit{Corollary}}
\newtheorem{definition}{\textit{Definition}}
\begin{document}
\maketitle
\newpage

\tableofcontents

\newpage

\begin{abstract}
In this report we show the emergent behavior of a group of agents, ordered from 1 to $n$, performing deviated, linear, cyclic pursuit, in the presence of a broadcast guidance control.
Each agent senses the relative position of its target, i.e. agent $i$ senses the relative position of agent $i+1$. 
 The broadcast control, a velocity signal, is detected by a random set of agents in the group. We assume the agents to be  modeled  as single integrators.   We show that the emergent behavior of the group is determined by the deviation angle and by the set of agents detecting the guidance control. 

\end{abstract}

\section{Introduction}

The work presented in this report is a first follow-up of the report "Guidance of Agents in Cyclic Pursuit",  see  \cite{S-B-arxiv-2020}, where the problem of (direct)  linear and non-linear cyclic pursuit, in the presence of a broadcast velocity control detected by a random set of  agents in the group,  has been thoroughly investigated. In the existing literature, "cyclic pursuit" is meant to be an \textbf{autonomous} system of $n$ agents, ordered
from 1 to $n$, which behaves according to the following rule: agent $i$ chases agent $i+1$ and agent $n$ chases agent $1$. In the sequel \textbf{all indices associated with agents are $modulo(n)$} and agent $i+1$ is defined as the target of agent $i$.

 As in our previous work, we assume the agents to be identical, memory-less, particles, modeled as single integrators. Each  agent  can sense the relative position of its target but its own absolute location is unknown.  The orientation of all local coordinate systems is aligned to that of the global coordinate system,  i.e. agents are assumed to have compasses enabling them to align their local reference frames to a global reference direction (a common north).

In autonomous linear deviated cyclic pursuit,  each agent  moves in a direction rotated by an angle $\theta$ from the line of sight to its target. 

Let $p_i(t)$ be the position of agent $i$ at time $t$;  $p_i(t) \in \mathbb{R}^2$. 
Then, the rule of movement for agent $i$, in the autonomous system is 
\begin{equation}\label{dot_p_i-h-R}
 \dot{p}_i(t) =R(\theta) (p_{i+1}(t) - p_i(t))
\end{equation}
where  $R(\theta)$ is the rotation matrix 
\begin{equation}\label{eq-R}
 R(\theta) = \left [
\begin{matrix}
cos(\theta) & sin(\theta)\\
-sin(\theta) & cos(\theta)
\end{matrix} \right ]
\end{equation}
This rule of movement is illustrated in Fig. \ref{fig-theta_def} where $u_i(t)= (p_{i+1}(t) - p_i(t))$. 
\begin{figure}[H]
\begin{center}
\includegraphics[scale=0.5]{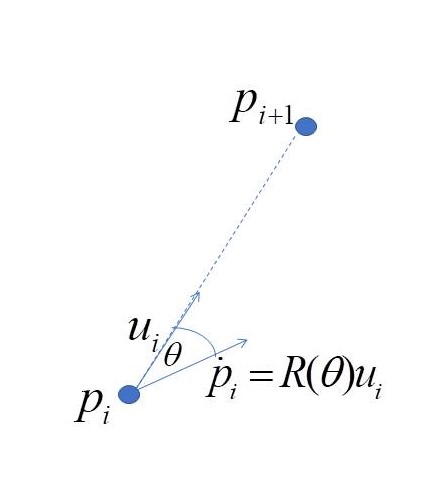}
\caption{Illustration of deviated pursuit }\label{fig-theta_def}
\end{center}
\end{figure} 
The deviation angle $\theta$ is assumed to be constant and common to all agents.

\subsection{Literature survey}
Various researchers have treated the problem of formations obtained in  \emph{autonomous cyclic pursuit},  without external input. In the sequel we refer to those considering agents modeled as single-integrators with linear dynamics.  
The problem of  linear agreement algorithms for agents with single integrator kinematics, over general topologies, was vastly investigated (see e.g. \cite{JLM},\cite{OS-M2007}, \cite{OS-M2003}, \cite{S-B-arxiv}). Cyclic pursuit is a particular unidirectional topology where each agent receives information from a single neighbor, its target.

Lin et al. consider in \cite{LBF2003} $n$ ordered and numbered points in the complex plane, each representing a freely mobile agent. They show that when the agents are in linear cyclic pursuit they converge to    
the centroid of the points and the centroid is stationary. Moreover, 
 other formations are achievable by a simple modification, where
each agent chases a "fixed" displacement of its target.

Sinha and Goose, \cite{SinGoo07}, consider a group of autonomous mobile agents, with  identical behavior rules but with generally different control gains
\begin{eqnarray*}
\dot{p}_i &=& u_i\\
u_i &=& k_i (p_{i+1}-p_i)
\end{eqnarray*}
for $i= 1,\dots, n$.
They show that, by suitably selecting the gains $k_i$, the collective behavior of the agents can be controlled
 to obtain not only  a point of convergence but also directed motion, where  the trajectories of all the agents converge to a
straight line as $t \rightarrow	\infty$, (see conditions in Theorem 3,  Theorem 4). 

Pavone and Frazzoli, \cite{PavFraz}, generalize the (linear) cyclic pursuit strategy to deviated cyclic pursuit, in the context of geometric pattern formation.
They show that $n$ agents  operating in $\mathbb{R}^2$, in deviated cyclic pursuit,  with a common offset angle  $\theta$, will 
eventually converge to a single point, a circle or an ever expanding logarithmic spiral pattern, depending on the value of  $\theta$, as follows:
\begin{itemize}
\item if $|\theta| < \pi/n$, to a single point, determined by the initial positions of the agents (the initial centroid)
\item if $|\theta| = \pi/n$, to an evenly spaced circle formation, with radius
 determined by the initial positions of the agents
\item if $ \pi/n < |\theta| < 2\pi/n$, to an evenly spaced, ever expanding, logarithmic spiral formation.
\end{itemize}

Ramirez et al. in \cite{RamirezPhD}, \cite{RPFM2009}, extended these results in three directions:
\begin{enumerate}
\item the agents move in $\mathbb{R}^3$
\item control of the center of the formation by modifying the autonomous velocity control
\begin{eqnarray*}
u_i &=& R(\theta) (p_{i+1} -p_i) - k_c p_i; \quad k_c \geq 0\\
\dot{p}_i &=& u_i
\end{eqnarray*}
If $k_c >0$ then 
\begin{enumerate}
\item the center of the formation is no longer determined
by the initial positions of the agents, instead it always converges, exponentially fast, to the origin. 
\item each agent is required to know not only the relative position to its target but also its own position, (in contradiction to our requirements here).
\end{enumerate}

\item control the radius of the  evenly-spaced circular formations.\\
The key idea here
is to relax the assumption of a common deviaton angle, $\theta$, and make the deviation angle of each agent, $\theta_i$,  a function of the location of the agents and the desired inter-agent distance. In this case, the rule of motion (\ref{dot_p_i-h-R}) becomes
\begin{equation*}
\dot{p}_i=R(\theta_i)(p_{i+1}-p_i)
\end{equation*}
where $$\theta_i =\frac{\pi}{n}+k_{th} (r- \|p_{i+1}-p_i\|); \quad r, k_{th} > 0$$
and $r$ is the desired inter-agent distance and $k_{th}$ is a gain. 
\end{enumerate}

Ren considered more general topologies and shows that collective motions including rendezvous, circular
patterns, and logarithmic spiral patterns can be achieved by
introducing Cartesian coordinate coupling to existing consensus
algorithms.  A particular case of coupling is the rotation matrix.
In  \cite{Ren2008} the effect of coordinates coupling, introduced by a rotation matrix, on linear consensus algorithms for agents with single integrator kinematics was considered.
In this case, the rule of movement is 
\begin{equation}\label{p_i-Ren}
\dot{p}_i = \sum_{k \in N_i} R(\theta) (p_k-p_i)
\end{equation}
where $R(\theta)$ is the rotation matrix and $N_i$ is the neighborhood of agent $i$, representing a \emph{general network topology}, in contrast to \cite{PavFraz}, where the analysis of a similar problem was limited to unidirectional
ring topology (cyclic pursuit).   
It is shown that both the network topology and the value
of the rotation angle affect the resulting collective motions.
When the nonsymmetric Laplacian matrix, representing the network topology, has
certain properties and the rotation angle is below, equal, or
above a critical value, the agents will eventually rendezvous,
move on circular orbits, or follow logarithmic spiral curves.
In particular, when the agents eventually move on circular
orbits, the relative radius of the orbits  is equal to the relative
magnitude of the components
of a right eigenvector associated with a critical eigenvalue of
the nonsymmetric Laplacian matrix, in contrast to the common circular obits obtained in case of cyclic pursuit (see    \cite{PavFraz}).

\subsection{An Overview of The New Results}

The contribution of this work is in the analysis of the impact of an exogenous velocity control on the emergent behavior of $n$ agents performing (autonomous) \emph{deviated} linear cyclic pursuit. The exogenous velocity control, we assume, is broadcast by a controller and detected by a random set of agents from the group, that thereby become temporary (ad-hoc) "leaders" of the swarm.

In this case,  the rule of movement (\ref{dot_p_i-h-R}) becomes 
\begin{equation}\label{GenWithControl-1}
  \dot{p}_i(t) = R(\theta)  (p_{i+1}(t) - p_i(t)) +  b_i(t) U_c(t); \quad i=1,...,n
\end{equation}
where
\begin{itemize}
   \item $U_c(t)$ is the broadcast velocity vector signal at time $t$
  \item $b_i(t)$ is the indicator of detection of $U_c(t)$ by agent $i$, given by  
\begin{equation}\label{eq-b_i}
b_i = \begin{cases}
1 \quad & \text{if agent  } $i$ \text{  detects  } U_c\\
0 \quad & \text{otherwise}
\end{cases}
\end{equation}
  
\end{itemize}

\subsubsection{Group dynamics with broadcast control}\label{GroupDyn}

Let $P=\left [p_1, p_2,...p_n \right ]^T$ be the vector of stacked agent positions, where $p_i, \quad i=1, \dots, n$ evolves according to eq. (\ref{GenWithControl-1}). The group dynamics of the system can  be written as:
\begin{equation}\label{P-LCR}
  \dot{P}(t)= \hat{M} P(t)  +\hat{B}(t) U_c(t)
\end{equation}
where 
\begin{itemize}
   \item $\hat{M}=M \otimes R(\theta)$, such that 
    $M$ is the circulant matrix (\ref{eq-M}):
 \begin{align}\label{eq-M}
   M_{n\times n}= circ[ -1, 1, 0, 0, \hdots, 0]=
          \begin{bmatrix}
           -1 & 1 & 0 & 0 & \hdots & 0 \\
            0 & -1 & 1 & 0 & \hdots & 0 \\
	   &&\vdots \\
            1 & 0  && \hdots& 0  & -1 \\
          \end{bmatrix}
  \end{align}
  and $R(\theta)$ is defined by (\ref{eq-R}).
  \item $\hat{B}(t) = B(t) \otimes I$, where $B$ is the "leaders" indicator, i.e. $B_{n \times 1}(i)=b_i$, where $b_i$ is defined by (\ref{eq-b_i})  and $I$ is the $2 \times 2$ identity matrix 
   \item $\otimes$ denotes the Kronecker product, see Appendix \ref{Kron-def} 
   \item $U_c$ is the external broadcast control, $U_c(t) \in \mathbb{R}^2$
 \end{itemize} 
   Due to the structure of $M$,  $\hat{M}$ is block circulant, as follows:
  \begin{align}\label{M_hat}
\hat{M} =
             \begin{bmatrix}
          - R(\theta) & R(\theta) & 0_{2 \times 2}&0_{2 \times 2}&\hdots & 0_{2 \times 2} \\ \\
           0_{2 \times 2} &  - R(\theta) & R(\theta) & 0_{2 \times 2}&\hdots &0_{2 \times 2} \\
          \\
	   &&\vdots \\
	\\
         R(\theta)   & 0_{2 \times 2} & \hdots &&0_{2 \times 2}& -R(\theta)
          \end{bmatrix}
  \end{align}

The matrix $M$  is time independent by its definition (see (\ref{eq-M})). The assumption of cyclic pursuit with a constant deviation angle makes $R(\theta)$,  and therefore $\hat{M}$ too,  time independent. We assume the exogenous control, $U_c(t)$, and the set of agents detecting it, defined by the leaders indicator vector $B(t)$, to be piecewise constant.   
 In the sequel we treat separately each
 interval where $U_c(t)$  and $B(t)$ ( the set of agents detecting the exogenous control) are constant and thus it is convenient  to let $t$ denote the relative time since the beginning of the interval and $P(0)$ 
the state of the system at the moment of change, defined to be $t=0$. 
\begin{remark}
The  behavior of the group of agents in multiple intervals, where $U_c$ and/or $B$ change at the start of each new interval and the end conditions of one interval are the start conditions of the next interval, is discussed and illustrated by simulation in section \ref{Sim-multi}.
\end{remark}   
In each interval 
 eq. (\ref{P-LCR}) becomes 
\begin{equation}\label{eq-LTI}
 \dot{P}(t)= \hat{M} P(t)  +\hat{B} U_c
\end{equation}
i.e. the system evolves as a linear time independent system (LTI) with the well known solution (\ref{eq-Pt}), (see \cite{TK}).
\begin{equation}\label{eq-Pt}
P(t) = e^{\hat{M} t} P(0) + \int_{0}^{t} e^{\hat{M}(t-\tau)} \hat{B} U_c \mathrm{d}\tau
\end{equation}

\subsubsection{The resulting emergent behavior}
Given an external broadcast control, $U_c$, the emergent behavior of the group is a function of the deviation angle $\theta$, the critical angle, $\displaystyle \theta_c=\frac{\pi}{n}$ and the (random) subset of agents detecting the broadcast control (leaders), represented by the leaders indicator vector $B$. We have the following results:
\begin{itemize}
\item if $\theta < \theta_c$ then
\begin{itemize}
\item if $B=\mathbf{1}_n$, i.e.  all agents detect the broadcast control signal, then the asymptotic behavior of the group is given by
$$\boxed{p_i(t \rightarrow \infty) = p_c+U_c t; \quad i=1, \dots, n}$$
where $p_c$ is the centroid of the initial positions. Thus, if  all agents detect the broadcast control, the agents will eventually  gather and move as a single point with velocity $U_c$ on a line anchored at $p_c$.
\item if $B \neq \mathbf{1}_n$ then
$$\boxed{p_i(t \rightarrow \infty) = p_c+\frac{n_l}{n} U_c t + \sigma_i R(-\theta) U_c; \quad i=1, \dots, n}$$
 In this case, all agents will eventually move with velocity  $\frac{n_l}{n} U_c $ on parallel lines anchored at $p_c+ \sigma_i R(-\theta) U_c$, where 
\begin{itemize}
\item $n_l$ is the number of agents detecting $U_c$.
\item $\sigma_i R(-\theta) U_c$ is the asymptotic deviation of the anchor of the movement line of agent $i$ from the centroid $p_c$ and $\sigma_i$ is a function of $B$, i.e. of the set of leaders (see section \ref{Derive-Dev})
\end{itemize} 
   Therefore,  the agents will asymptotically align in a linear formation, rotated by $-\theta$ from  the direction of $U_c$, and the formation will move with velocity $\displaystyle \frac{n_l}{n} U_c$. 
\end{itemize} 
\item if $\theta = \theta_c$ then
\begin{itemize}
\item if $B=\mathbf{1}_n$ then
$$\boxed{p_i(t \rightarrow \infty) = p_c+U_c t + p^r_i(t); \quad i=1, \dots, n}$$
where 
\begin{itemize}
\item $p^r_i(t)$ is a rotation component of the position at time $t$, s.t.
$$\boxed{p^r_i(t) = [r \sin (\omega t+\alpha_i), r \cos(\omega t+\alpha_i)]^T}$$ 
\item $\omega=2 \sin(\frac{\pi}{n})$
\item $R$ is the radius of a circular orbit, common to all agents, determined by the number of agents and the initial positions
\item The agents are equally spaced on the orbit, at angular distance $\displaystyle \frac{2 \pi}{n}$, i.e. $\displaystyle \alpha_{i+1}-\alpha_i = \frac{2 \pi}{n} $
\end{itemize}

Therefore, the agents are asymptotically moving, at equal distances, on a circular   orbit, around a common center moving with velocity $U_c$. 
\item if $B \neq \mathbf{1}_n$ then
$$\boxed{p_i(t \rightarrow \infty) = p_c+\frac{n_l}{n} U_c t + p^r_i(t) + \tilde{\sigma_i} R(-\theta) U_c; \quad i=1, \dots, n}$$
where $p^r_i(t)$ is the position of agent $i$ on a common circular orbit, same as above, but in this case the center of the orbit, for agent $i$  is shifted by $\tilde{\sigma_i} R(-\theta) U_c $ from a common center, where $\tilde{\sigma_i} $ is a function of $B$ (see section  \ref{Derive-Dev}). All centers move with velocity $\displaystyle \frac{n_l}{n} U_c$. 
\end{itemize}

\end{itemize}

\begin{remark}
We observe that the circular formation and its characteristics (radius and angular distances between agents) are independent of the broadcast control $U_c$ and the set of agents detecting it. These influence only the center of the orbit and its movement.
\end{remark}
\begin{remark}
The emergent behavior for the various deviation angles, summarized above, is determined by the eigenvalues of the matrix $\hat{M}$ (see Appendix \ref{eig-hat_M-1}). We note that if $\theta > \theta_c$ then then $\hat{M}$ has   at least two non-zero eigenvalues which lie in the open right-half complex plane causing the position of the agents to spiral out. This is an unstable system which is not discussed herein.
\end{remark} 
The emergent behaviors are rigorously derived in section \ref{P_t_inf} and illustrated by simulations in section \ref{Sim}.

\subsection{Preliminaries - Properties of the matrix $\hat{M}$ }\label{prop-hat_M}
The following properties of the matrix $\hat{M}$ are used in the sequel for the derivation of the emergent behavior of the  system, $P(t \rightarrow \infty)$,  evolving under (\ref{eq-LTI}).
\begin{itemize}
 \item $\hat{M}$ is  a normal matrix (see Lemma \ref{normal-M}), therefore it is unitarily diagonizable, i.e.
 \begin{equation*}
 \hat{M} =  \Gamma \Lambda \Gamma^*
\end{equation*}
where 
\begin{itemize}
\item $\Lambda$ is the diagonal matrix of eigenvalues of $\hat{M}$, denoted   by\\ $\mu_k^\pm; \quad k=0, ...., n-1$, (see Appendix \ref{eig-hat_M-1})
\begin{equation}\label{eq-mu_k}
\mu_k^{\pm} =2\sin(\frac{ \pi k}{n})e^{-j (\frac{\pi}{2}+\frac{ \pi  k}{n}\pm \theta)}
\end{equation} 
\item $\Gamma$ is the unitary matrix of eigenvectors of $\hat{M}$, denoted  by\\ $\zeta_k^\pm; \quad k=0, ...., n-1$, see Appendix \ref{eig-hat_M-1}, such that
\begin{equation}\label{zeta_k-pm}
\zeta_k^\pm = v_k \otimes (1,\pm j)^T
\end{equation}
where $$  v_k=\frac{1}{\sqrt{n}} \left ( 1, e^{-2 \pi jk/n}, e^{-4 \pi jk/n},\hdots, e^{-2 \pi jk(n-1)/n}  \right)^T; \quad k \in \left \{0, 1, \hdots, n-1 \right \}$$
\end{itemize}
\item  For $n$ agents in linear cyclic pursuit with common deviation angle $\theta$, there exists a critical value  $\theta_c=\frac{\pi}{n}$, (see Corollary \ref{eig-M-2} in Appendix \ref{comp-theta_c}), such that
\begin{itemize}
\item[(a)] if $|\theta| < \theta_c$, then $\hat{M}$ has two zero eigenvalues, $\mu_0^\pm$, and all other eigenvalues  lie in the open left-half complex plane
  \item[(b)] if  $|\theta| = \theta_c$, then $\hat{M}$ has two zero eigenvalues, $\mu_0^\pm$, and two non-zero eigenvalues which lie on the imaginary axis, while all remaining eigenvalues lie in the open left-half complex plane. The eigenvalues on the imaginary axis are 
  $\mu_{n-1}^+$ and  $\mu_1^-$       
  \item[(c)] if $|\theta| > \theta_c $, then $\hat{M}$ has  two zero eigenvalues and at least two non-zero eigenvalues which lie in the open right-half complex plane, therefore this is \textbf{an unstable system which shall not be discussed} herein. The agents in this case spread out and cease to remain in a finite extent constellation.
  
\end{itemize} 
  \end{itemize}

\section{Derivation of The  Emergent Behavior}\label{P_t_inf}
We consider a piecewise constant time interval of a system  evolving according to eq. (\ref{eq-LTI}), with a solution given by eq. (\ref{eq-Pt}).  Since the emergent behavior is given by $P(t \rightarrow \infty)$ we assume that the dwell time of the interval is long enough to approach the asymptotic behavior. 
We recall (see eq. (\ref{eq-Pt}))  
\begin{equation*}
P(t) = e^{\hat{M} t} P(0) + \int_{0}^{t} e^{\hat{M}(t-\tau)} \hat{B} U_c \mathrm{d}\tau
\end{equation*}
which can be decomposed into
\begin{equation}
P(t) = P^{(h)}(t) + P^{(u)}(t)\label{P-u}
\end{equation}
where 
\begin{itemize}
\item $\displaystyle P^{(h)}(t)= e^{\hat{M} t} P(0) $
 is the homogeneous part of the solution, representing the zero input dynamics of the swarm
 \item $\displaystyle P^{(u)}(t) = \int_{0}^{t} e^{\hat{M}(t-\tau)} \hat{B} U_c \mathrm{d}\tau$ is the contribution of the guidance control, $U_c$, to the dynamics of the swarm

\end{itemize}

\subsection{Zero input dynamics }\label{ZeroInp}

  \begin{eqnarray}
  P^{(h)}(t) &=& e^{\hat{M} t} P(0)\\ 
  &=& \frac{1}{2 }\sum_{i=0}^{n-1} [ \zeta^+_i e^{\mu^+_i t} (\zeta^+_i)^* +  \zeta^-_i e^{\mu^-_i t} (\zeta^-_i)^* ] P(0)\label{eq-Ph-t}
\end{eqnarray}
where we used the decomposition property of $\hat{M}$ and  $\mu_i^\pm, \zeta_i^\pm$ are given by equations (\ref{eq-mu_k}) and (\ref{zeta_k-pm}) respectively.

Separating in (\ref{eq-Ph-t}) the elements with zero eigenvalues  from the remaining elements, we can rewrite (\ref{eq-Ph-t}) as
\begin{equation}\label{P-h}
  P^{(h)}(t) = P^{(h_0)} + \Delta_P^{(h_0)}(t)
\end{equation}
where
\begin{itemize}
\item $P^{(h_0)}$ is the  component of the homogeneous position vector $P^{(h)}(t)$ due to the zero eigenvalues 
\begin{equation}\label{eq-Ph-1}
P^{(h_0)} = \frac{1}{2 }[ \zeta^+_0(\zeta^+_0)^* +  \zeta^-_0 (\zeta^-_0)^*  ] P(0)
\end{equation}
$P^{(h_0)}$ 
 is time independent and common to all $\theta$, see section \ref{derive-Ph0}
\item $ \Delta_P^{(h_0)}(t)$ is the  component of $P^{(h)}(t)$, due to the remaining eigenvalues 
\begin{equation}\label{Delta-Ph-1}
\Delta_P^{(h_0)}(t)  =  \frac{1}{2}\sum_{i=1}^{n-1} [ \zeta^+_i e^{\mu^+_i t} (\zeta^+_i)^* +  \zeta^-_i e^{\mu^-_i t} (\zeta^-_i)^* ] P(0)
\end{equation}
$\Delta_P^{(h_0)}(t)$
 is time dependent and its asymptotic value depends on $\theta$, see section \ref{derive-Delta-Ph0}  
\end{itemize}

\subsubsection{The derivation of $P^{(h_0)}$}\label{derive-Ph0}
\begin{lem}\label{p-h1}
Let $P^{(h_0)}:= (p_1^{(h_0)}, \dots, p_{n}^{(h_0)})^T $ where $P^{(h_0)}$ is defined by  eq. (\ref{eq-Ph-1}).
Denote by $p_c =(x_c, y_c)^T $ the centroid of the agents' initial positions, $p_i(0)=(x_i(0),y_i(0))^T$. 

Then
\begin{equation}\label{eq-Pc}
  p_i^{(h_0)} = p_c; \quad \forall i \in [1, \dots, n]
\end{equation}
\end{lem}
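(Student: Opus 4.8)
The plan is to compute the matrix $\frac{1}{2}[\zeta_0^+(\zeta_0^+)^* + \zeta_0^-(\zeta_0^-)^*]$ explicitly and show it acts as an averaging operator on the stacked position vector. First I would recall from (\ref{zeta_k-pm}) that $\zeta_0^\pm = v_0 \otimes (1,\pm j)^T$, where $v_0 = \frac{1}{\sqrt n}(1,1,\dots,1)^T$ is the all-ones vector normalized. Using the mixed-product property of the Kronecker product (stated in Appendix \ref{Kron-def}), I would write $\zeta_0^\pm(\zeta_0^\pm)^* = (v_0 v_0^*) \otimes \big((1,\pm j)^T(1,\mp j)\big)$, so that the sum becomes $(v_0 v_0^*) \otimes \big[(1,j)^T(1,-j) + (1,-j)^T(1,j)\big]$. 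The inner $2\times 2$ factor evaluates to $\begin{bmatrix} 2 & 0 \\ 0 & 2\end{bmatrix} = 2I$, hence $\frac{1}{2}[\zeta_0^+(\zeta_0^+)^* + \zeta_0^-(\zeta_0^-)^*] = (v_0 v_0^*)\otimes I = \frac{1}{n}(\mathbf{1}_n\mathbf{1}_n^T)\otimes I$.

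Next I would apply this to $P(0)$. Since $P(0) = (p_1(0),\dots,p_n(0))^T$ is the stack of the $2$-vectors $p_i(0)$, and $(\mathbf{1}_n\mathbf{1}_n^T)\otimes I$ acting on such a stack replaces every block by the sum $\sum_{i=1}^n p_i(0)$, we get $P^{(h_0)} = \frac{1}{n}(\mathbf{1}_n\mathbf{1}_n^T \otimes I)P(0) = \big(\tfrac1n\sum_i p_i(0), \dots, \tfrac1n\sum_i p_i(0)\big)^T$. By definition of the centroid, $\tfrac1n\sum_i p_i(0) = p_c = (x_c,y_c)^T$, so each block $p_i^{(h_0)} = p_c$, which is exactly (\ref{eq-Pc}).

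The only real subtlety — and the step I would be most careful about — is bookkeeping with the Kronecker structure and the complex conjugate transpose: one must verify that $(v_0\otimes w)^* = v_0^* \otimes w^*$ and that the block action of $(\mathbf{1}\mathbf{1}^T)\otimes I$ on a stacked vector is indeed "sum the blocks, then broadcast." Neither is deep, but sign/ordering slips in the $(1,\pm j)$ factors are easy to make; checking the inner product $(1,-j)(1,j)^T = 1 + 1 = 2$ (not $0$) is the crux that makes the cross terms combine to $2I$ rather than cancel. Everything else is a direct substitution, and the time-independence and $\theta$-independence asserted in the statement are immediate since $\mu_0^\pm = 0$ for all $\theta$ (by property (a)–(c) in Section \ref{prop-hat_M}), so $e^{\mu_0^\pm t} = 1$ never enters.
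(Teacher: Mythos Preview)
Your proof is correct and follows essentially the same approach as the paper's: both compute the projection $\tfrac{1}{2}[\zeta_0^+(\zeta_0^+)^* + \zeta_0^-(\zeta_0^-)^*]$ explicitly and recognize it as the block-averaging operator $\tfrac{1}{n}(\mathbf{1}_n\mathbf{1}_n^T)\otimes I$ applied to $P(0)$. The only presentational difference is that the paper sets $Y^{(0)}=\zeta_0^+(\zeta_0^+)^*$, observes $\zeta_0^-(\zeta_0^-)^*=\overline{Y^{(0)}}$, and writes the result as $\Re(Y^{(0)})P(0)$ with the $2n\times 2n$ matrix displayed explicitly, whereas you factor through the Kronecker mixed-product identity to isolate the $2\times 2$ inner block first --- a slightly cleaner bookkeeping, but the same computation.
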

\begin{proof}
Recalling that the two eigenvectors  of $\hat{M}$,  corresponding to the two zero eigenvalues are $\zeta_0^\pm = v_0 \otimes [1, \pm j]^T$ where $v_0=\mathbf{1}_n$ we have
\begin{equation}\label{zeta_1}
   \zeta_0^\pm = \frac{1}{\sqrt{n}}[1, \pm j,1,\pm j, \dots, 1, \pm j]^T
\end{equation}
Let $Y^{(0)}=  \zeta^+_0(\zeta^+_0)^* $. Then, eq.  (\ref{eq-Ph-1}) can be rewritten as 
\begin{equation*}
 P^{(h_0)} = \frac{1}{2 }(Y^{(0)}+\overline{Y^{(0)}})P(0)=  \Re(Y^{(0)}) P(0)
\end{equation*}
\begin{align}
\Re(Y^{(0)}) = \frac{1}{n}\begin{bmatrix}
           1 & 0 &1 & 0 \dots &1 & 0 \\
           0 & 1 & 0 &1 \dots &0 &1 \\
           \vdots & & & \dots & & \vdots\\
            1 & 0 &1 & 0 \dots &1 & 0 \\
            0 & 1 & 0 &1 \dots &0 &1
          \end{bmatrix}\label{ReY}
  \end{align}
Thus,
\begin{equation}\label{eq-Ph-2}
 p_i^{(h_0)} = (\frac{1}{n}\sum_{i=0}^{n-1} x_i(0), \frac{1}{n}\sum_{i=0}^{n-1} y_i(0))^T \equiv p_c; \quad \forall i \in [1, \dots, n]
\end{equation}
\end{proof}
 \subsubsection{Derivation of the asymptotic behavior of $\Delta_{P}^{(h_0)}(t)$}\label{derive-Delta-Ph0} 
$\Delta_{P}^{(h_0)}(t)$ is the increment of the position, at time $t$, due to the eigenvalues other than the zero eigenvalues, see (\ref{P-h}).  
 \begin{lem}\label{L-Delta-p_h1}
 
Let
\begin{equation}\label{Delta_P_h0}
\Delta_P^{(h_0)}(t)= ( \Delta_{p_1}^{(h_0)}(t), \dots , \Delta_{p_{n}}^{(h_0)}(t)  )^T
\end{equation}   
 where $\Delta_{P}^{(h_0)}(t)$ is defined as in eq. (\ref{Delta-Ph-1}).
Then $\Delta_P^{(h_0)}(t \rightarrow \infty)$ is a function of $\theta$, as follows:

\begin{enumerate}[label=\textbf{L.\arabic*},ref=L.\arabic*]
\item if $|\theta| < \theta_c$ then  $\Delta_{p_k}^{(h_0)}(t)\xrightarrow{t \to \infty} \mathbf{0}_{2}; \forall k \in [1,\dots,n]$
\item  if $|\theta| = \theta_c$  then 
the agents asymptotically move, equally spaced, on a common circular orbit.
\begin{itemize}
\item The angular velocity is $\omega=2 \sin(\frac{\pi}{n})$
\item  The  angular distance between consecutive agents is $\frac{2 \pi}{n}$
\item The radius of the circular orbit is determined by the number of agents,$n$, and their initial positions $p_l(0);l=1,\dots,n$
\end{itemize} 

\end{enumerate}

\end{lem}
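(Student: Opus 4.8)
The plan is to read off the long-time behaviour of $\Delta_P^{(h_0)}(t)$ directly from its spectral expansion (\ref{Delta-Ph-1}), combined with the eigenvalue classification recalled in Section~\ref{prop-hat_M}. Since $\Delta_P^{(h_0)}(t)$ involves only the modes $\mu_i^{\pm},\zeta_i^{\pm}$ with $i\in\{1,\dots,n-1\}$, its asymptotics are governed by those among these eigenvalues having the largest real part. For \textbf{L.1}, i.e. $|\theta|<\theta_c$, property (a) of Section~\ref{prop-hat_M} says every nonzero eigenvalue lies in the open left half-plane, so $e^{\mu_i^{\pm}t}\to 0$ for all $i\in\{1,\dots,n-1\}$; hence each of the $2(n-1)$ rank-one terms in (\ref{Delta-Ph-1}) tends to the zero matrix and $\Delta_{p_k}^{(h_0)}(t)\to\mathbf 0_2$ for every $k$. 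This case needs nothing further.

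For \textbf{L.2}, i.e. $|\theta|=\theta_c=\pi/n$, property (b) isolates exactly two nonzero eigenvalues on the imaginary axis, $\mu_{n-1}^{+}$ and $\mu_1^{-}$, while the remaining $2n-4$ nonzero eigenvalues stay in the open left half-plane. First I would substitute $\theta=\pi/n$ into (\ref{eq-mu_k}): using $\sin\!\big(\pi(n-1)/n\big)=\sin(\pi/n)$ and the collapse of the phase, namely $\pi(n-1)/n+\pi/n=\pi$ and $\pi/n-\pi/n=0$, one obtains $\mu_{n-1}^{+}=j\omega$ and $\mu_1^{-}=-j\omega$ with $\omega=2\sin(\pi/n)$. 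Dropping the exponentially decaying modes, the asymptotic form of (\ref{Delta-Ph-1}) is $\tfrac12\big[\zeta_{n-1}^{+}e^{j\omega t}(\zeta_{n-1}^{+})^{*}+\zeta_1^{-}e^{-j\omega t}(\zeta_1^{-})^{*}\big]P(0)$.

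The key simplification is a conjugacy check: from (\ref{zeta_k-pm}) and the formula for $v_k$ one sees $(v_{n-1})_\ell=\tfrac{1}{\sqrt n}e^{2\pi j\ell/n}=\overline{(v_1)_\ell}$, so $\zeta_1^{-}=\overline{\zeta_{n-1}^{+}}$ and $\mu_1^{-}=\overline{\mu_{n-1}^{+}}$; the bracketed expression is therefore twice the real part of a single complex term, giving $\Delta_P^{(h_0)}(t\to\infty)=\Re\!\big[\zeta_{n-1}^{+}e^{j\omega t}(\zeta_{n-1}^{+})^{*}P(0)\big]$, manifestly real as it must be. Then I would unpack the Kronecker structure $\zeta_{n-1}^{+}=v_{n-1}\otimes(1,j)^{T}$: writing the single complex scalar $(\zeta_{n-1}^{+})^{*}P(0)=\rho\,e^{j\phi}$, the $2$-vector block for agent $\ell+1$ evaluates to $\tfrac{\rho}{\sqrt n}\big(\cos(\omega t+\tfrac{2\pi\ell}{n}+\phi),\,-\sin(\omega t+\tfrac{2\pi\ell}{n}+\phi)\big)^{T}$, which is the boxed circular-orbit form (up to absorbing a $\pi/2$ into the phase). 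This is a circle of radius $r=\rho/\sqrt n$, common to all agents because $|(v_{n-1})_\ell|=1/\sqrt n$ is independent of $\ell$ and $\rho=|(\zeta_{n-1}^{+})^{*}P(0)|$ is one fixed scalar; it is traversed at angular velocity $\omega=2\sin(\pi/n)$, and successive agents are offset in phase by exactly $2\pi/n$. Since $\rho$ is built from $v_{n-1}$ and $P(0)$, the radius depends only on $n$ and on the initial positions, as claimed.

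The step I expect to be the main obstacle is the explicit evaluation at the critical angle in case L.2: checking that $\mu_{n-1}^{+},\mu_1^{-}$ really reduce to $\pm j\omega$, and then threading the Kronecker product together with the conjugacy $\zeta_1^{-}=\overline{\zeta_{n-1}^{+}}$ so that the surviving contribution is seen to be a genuine real, equally-spaced circular motion sharing one common radius. By contrast, the decay arguments (in both L.1 and the subdominant modes of L.2) are immediate consequences of the eigenvalue locations established in Section~\ref{prop-hat_M}, and combining this lemma with Lemma~\ref{p-h1} to describe $P^{(h)}(t)=P^{(h_0)}+\Delta_P^{(h_0)}(t)$ is then routine.
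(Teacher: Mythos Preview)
Your strategy matches the paper's: split $\Delta_P^{(h_0)}(t)$ by eigenvalue, invoke the eigenvalue location results from Section~\ref{prop-hat_M} to discard the modes with negative real part, and analyse the surviving conjugate pair $\mu_{n-1}^{+},\mu_1^{-}$ at $|\theta|=\theta_c$. The paper likewise reduces to the single expression $\tfrac12\big[e^{\mu_{n-1}^{+}t}\zeta_{n-1}^{+}(\zeta_{n-1}^{+})^{*}+e^{\mu_1^{-}t}\zeta_1^{-}(\zeta_1^{-})^{*}\big]P(0)$ and uses the conjugacy $\zeta_{n-1}^{+}=\overline{\zeta_1^{-}}$.

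Where you diverge is in how you extract the circular motion. The paper (Appendix~\ref{proof-circ}) sets $Y=\zeta_1^{-}(\zeta_1^{-})^{*}$, computes the $2\times2$ blocks $[\Re(Y)]_{kl}$ and $[\Im(Y)]_{kl}$ explicitly, writes $p_k^{(h_1)}(t)$ in terms of scalars $a_k,b_k$ depending on $k$, and then proves by direct trigonometric manipulation that $R_{k+1}^2=a_{k+1}^2+b_{k+1}^2=a_k^2+b_k^2=R_k^2$ and $\tan\alpha_{k+1}=\tan(\alpha_k+2\pi/n)$. Your route is shorter: you recognise the asymptotic term as $\Re\big[\zeta_{n-1}^{+}e^{j\omega t}(\zeta_{n-1}^{+})^{*}P(0)\big]$, collapse the inner product to a single complex scalar $\rho e^{j\phi}$, and then the Kronecker structure $\zeta_{n-1}^{+}=v_{n-1}\otimes(1,j)^{T}$ makes the common radius $r=\rho/\sqrt{n}$ and the uniform phase increment $2\pi/n$ immediate from $|(v_{n-1})_\ell|\equiv 1/\sqrt{n}$ and $\arg((v_{n-1})_{\ell+1})-\arg((v_{n-1})_\ell)=2\pi/n$. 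This buys you a cleaner argument that avoids the block-by-block trigonometry; the paper's version, on the other hand, yields the explicit formulae $r=\sqrt{c_1^2+c_2^2}$ with $c_1,c_2$ written out in terms of the $p_l(0)$, which it later reuses by analogy in the input-induced analysis of Section~\ref{InpDyn}.
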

\begin{proof}\

\begin{enumerate}[label=\textbf{L.\arabic*},ref=L.\arabic*]
  \item  if \textbf{\underline{$|\theta|<\theta_c$ }},  all  non-zero eigenvalues of $\hat{M}$ have negative real part, see Corollary \ref{eig-M-2}.(a),  in Appendix \ref{comp-theta_c}
  
  Thus, from eq. (\ref{Delta-Ph-1}) we have  $\Delta_P^{(h_0)}(t) \xrightarrow{t \to \infty} \mathbf{0}_{2n}$. Therefore, according to  (\ref{Delta-Ph-1}), $$\Delta_{p_k}^{(h_0)}(t)\xrightarrow{t \to \infty} \mathbf{0}_{2}; \quad k=1, \dots, n$$
 
  \item  if \textbf{\underline{$|\theta|=\theta_c$ }}, then in addition to the 2 zero eigenvalues,  $\hat{M}$ has 2 eigenvalues on the imaginary axis $\mu_{n-1}^+, \mu_1^-$  and all remaining eigenvalues of $\hat{M}$ have negative real part. Thus, $\Delta_P^{(h_0)}(t)$ can be separated into a part containing the eigenvalues on the imaginary axis, denoted by $P^{(h_1)}(t)$, and a part containing the remaining eigenvalues, which have negative real part, denoted by $P^{(h_2)}(t)$. 
  \begin{equation*}
  \Delta_P^{(h_0)}(t) = P^{(h_1)}(t)+P^{(h_2)}(t)
  \end{equation*}
  Since $P^{(h_2)}(t)$ contains only eigenvalues with negative real part, we have $$P^{(h_2)}(t) \xrightarrow{t \to \infty} \mathbf{0}_{2n}$$ and 
\begin{equation}\label{P-circ}
 \Delta_P^{(h_0)}(t) \xrightarrow{t \to \infty} P^{(h_1)}(t) = \frac{1}{2} [e^{\mu^+_{n-1} t} \zeta^+_{n-1} (\zeta^+_{n-1})^* + e^{\mu^-_1 t} \zeta^-_1 (\zeta^-_1)^*] P(0)
\end{equation}

We show, in Appendix \ref{proof-circ} that for $k=1,\dots,n$ 
\begin{equation*}
  \Delta_{p_k}^{(h_0)}(t)  \xrightarrow{t \to \infty}:= p_k{(h_1)}(t) =  [r \sin (\omega t+\alpha_k), r \cos(\omega t+\alpha_k)]^T
\end{equation*}
where
\begin{itemize}
\item $\omega=2 \sin(\frac{\pi}{n})$
\item $r= \sqrt{c_1^2+c_2^2}$, where 
\begin{eqnarray*}
c_1 &=&\frac{1}{n}\sum_{l=1}^n \left [ \begin{matrix}\cos(\frac{2 \pi}{n}l);  &  -\sin(\frac{2 \pi}{n}l) \end{matrix} \right ] p_l(0)\\
c_2 &=&\frac{1}{n}\sum_{l=1}^n\left [ \begin{matrix}\sin(\frac{2 \pi}{n}l);  &  \cos(\frac{2 \pi}{n}l) \end{matrix} \right ]p_l(0)
\end{eqnarray*}
\item $\displaystyle \alpha_{k+1}=\alpha_k +\frac{2 \pi}{n}$

\end{itemize}
Therefore, the agents asymptotically  move, equally spaced, on a circular orbit with a radius determined by the number of agents,$n$, and their initial positions $p_l(0);l=1,\dots,n$

 \end{enumerate}

\end{proof}

\subsection{Input induced group dynamics}\label{InpDyn}
We recall, see eq. (\ref{P-u}), that the position of the agents at time $t$ is given by 

\begin{equation*}
P(t) = P^{(h)}(t) + P^{(u)}(t)
\end{equation*}
where $P^{(h)}(t)$ is the homogeneous part of the solution, derived in section \ref{ZeroInp}, and $P^{(u)}(t)$ is the contribution of the guidance control, $U_c$. 
\begin{equation}\label{eq-Put}
 P^{(u)}(t) = \int_{0}^{t} e^{\hat{M}(t-\tau)} \hat{B} U_c \mathrm{d}\tau = \int_{0}^{t} \sum_{i=0}^{n-1} e^{\mu_i^{\pm}(t-\tau)} \hat{B} U_c \mathrm{d}\tau 
\end{equation}
where 
\begin{itemize}
\item $\hat{B}=B \otimes I_{2 \times 2}$
\item $B$ is the indicator vector to the ad-hoc leaders (the agents detecting the exogenous control, $U_c$)
\end{itemize}
Recalling the location of the eigenvalues of $\hat{M}$, see  Appendix \ref{eig-hat_M-1}, 
we can write
   \begin{equation}\label{eq-Pu}
 P^{(u)}(t) =   P^{(u_1)}(t) +  P^{(u_2)}(t)        + \Delta_P^{(u)}(t)
\end{equation}
where 
\begin{itemize}
\item $ P^{(u_1)}(t)$  is due to the elements with zero eigenvalue, existing for any $\theta$
\item $ P^{(u_2)}(t)$  is due to the elements with $\mu_1^-, \mu_{n-1}^+$ on the imaginary axis, existing if $\theta = \theta_c$
\item $ \Delta_P^{(u)}(t)$ is due to the remaining eigenvalues which have negative real part
\end{itemize} 

\subsubsection{Agents' asymptotic velocity }\label{V_i}
   \begin{eqnarray*}
 V(t) &=& \frac{d}{dt} \left[ P^{(u_1)}(t) +  P^{(u_2)}(t) + \Delta_P^{(u)}(t) \right ]\\
   &:=&V^{(1)}(t) + V^{(2)}(t) + V^{(3)}(t)      
\end{eqnarray*}

   \begin{eqnarray*}
 V^{(3)}(t) &=& \frac{d}{dt}  \Delta_P^{(u)}(t)\\
 &=& \begin{cases}
 \sum_{i=1}^{n-1} e^{\mu_i^{\pm}t} \hat{B} U_c  \quad & \text{if  } |\theta| < \theta_c       \\
 \left (e^{\mu_1^{+}t}+ e^{\mu_{n-1}^{-}t}+\sum_{i=2}^{n-1}  e^{\mu_i^{\pm}t} \right) \hat{B} U_c  \quad & \text{if  }  |\theta| = \theta_c       
 \end{cases}
\end{eqnarray*}

Since all $\mu_i$ in $V^{(3)}(t)$ have negative real part, we have  $$V^{(3)}(t)  \xrightarrow{t \to \infty}   \mathbf{0_{2n}} $$
Thus, the asymptotic velocity of agent $i$ is $$V_i(t \rightarrow \infty) = V_i^{(1)}(t \rightarrow \infty) + V_i^{(2)}(t \rightarrow \infty)  $$

\begin{lem}
Let $U_c$ be the broadcast velocity control,  $B$ the ad-hoc leaders pointer vector, $n$ the number of agents and  $n_l$ the number of agents detecting the broadcast control (ad-hoc leaders). Then,
\begin{enumerate}
\item if $|\theta| < \theta_c$ all agents move with asymptotic velocity $\displaystyle V_i(t \rightarrow \infty) =\frac{n_l}{n} U_c$.
\item if  $|\theta| = \theta_c$ then the asymptotic velocities of the agents are 
$$V_i(t \rightarrow \infty) = \frac{n_l}{n} U_c + V_i^c(t)  $$
 where $V_i^c(t)$ is the circular component of the velocity of agent $i$ at time $t$, such that
\begin{equation*}
V_i^c(t):= V_i^{(u_2)}(t \to \infty)= \left [ \begin{matrix}
r^V\sin(\omega t + \alpha_i^V)\\
r^V \cos(\omega t  + \alpha_i^V)
\end{matrix} \right]
\end{equation*}
where 
\begin{itemize}
\item $r^V$ is determined by $B$ and $U_c$ and is common to all agents
\item the phase shift between consecutive agents is $\displaystyle \frac{2 \pi}{n}$ 
\end{itemize}

\end{enumerate}

\end{lem}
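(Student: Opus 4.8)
The plan is to use the modal splitting already set up, $V(t)=V^{(1)}(t)+V^{(2)}(t)+V^{(3)}(t)$ with $V^{(\ell)}(t)=\frac{d}{dt}P^{(u_\ell)}(t)$. Since $V^{(3)}$ collects only eigenvalues with strictly negative real part it tends to $\mathbf{0}_{2n}$ (as already observed), so the asymptotic velocity is carried entirely by the zero modes $\mu_0^\pm$ (the term $V^{(1)}$) and, when $|\theta|=\theta_c$, by the purely imaginary modes $\mu_{n-1}^+$, $\mu_1^-$ (the term $V^{(2)}$). The rest of the proof is a bookkeeping exercise on the spectral projectors of $\hat M$ written in ``agent\,$\otimes$\,plane'' Kronecker form.

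For $V^{(1)}$: because $\mu_0^\pm=0$, the zero-mode part of $e^{\hat M s}$ is the $s$-independent matrix $\Pi_0:=\frac12\big(\zeta_0^+(\zeta_0^+)^*+\zeta_0^-(\zeta_0^-)^*\big)=\Re\big(\zeta_0^+(\zeta_0^+)^*\big)$, and by (\ref{zeta_1}) and the mixed-product rule for the Kronecker product this equals $\frac1n(\mathbf 1_n\mathbf 1_n^T)\otimes I_2$, i.e.\ the matrix displayed in (\ref{ReY}). Hence $P^{(u_1)}(t)=\int_0^t\Pi_0\hat B U_c\,d\tau=t\,\Pi_0\hat B U_c$, so $V^{(1)}(t)=\Pi_0\hat B U_c$ is constant; writing $\hat B U_c=B\otimes U_c$ and using $\mathbf 1_n^T B=n_l$ gives $V^{(1)}(t)=\frac1n\big(\mathbf 1_n(\mathbf 1_n^T B)\big)\otimes U_c=\frac{n_l}{n}\,\mathbf 1_n\otimes U_c$, that is $V_i^{(1)}=\frac{n_l}{n}U_c$ for every $i$. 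For $|\theta|<\theta_c$ there are no non-decaying modes beyond the zero ones, so $V^{(2)}\equiv 0$ and part 1 follows.

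For part 2, $|\theta|=\theta_c$: substituting $\theta=\pi/n$ into (\ref{eq-mu_k}) and (\ref{zeta_k-pm}) and using $v_{n-1}=\overline{v_1}$ gives $\mu_{n-1}^+=j\omega$ with $\omega=2\sin(\pi/n)$, $\mu_1^-=\overline{\mu_{n-1}^+}$ and $\zeta_{n-1}^+=\overline{\zeta_1^-}$, so the two imaginary-mode terms of $P^{(u_2)}(t)$ are complex conjugates and $P^{(u_2)}(t)=\Re\big[\zeta_{n-1}^+(\zeta_{n-1}^+)^*\hat B U_c\int_0^t e^{j\omega(t-\tau)}d\tau\big]$. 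The integral equals $(e^{j\omega t}-1)/(j\omega)$, so differentiation yields $V^{(2)}(t)=\Re\big[\zeta_{n-1}^+(\zeta_{n-1}^+)^*\hat B U_c\,e^{j\omega t}\big]$, a pure undamped oscillation, hence equal to its own asymptotic value, $V_i^{c}(t):=V_i^{(2)}(t)$. Finally one reads off the structure: by the mixed-product rule $\zeta_{n-1}^+(\zeta_{n-1}^+)^*=(v_{n-1}v_{n-1}^*)\otimes(I_2+jJ)$ with $J=\left[\begin{smallmatrix}0&-1\\1&0\end{smallmatrix}\right]$, so $\zeta_{n-1}^+(\zeta_{n-1}^+)^*\hat B U_c=(v_{n-1}v_{n-1}^*B)\otimes\big((I_2+jJ)U_c\big)=\beta\,v_{n-1}\otimes\big((I_2+jJ)U_c\big)$ where the scalar $\beta:=v_{n-1}^*B$ depends only on $B$. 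Extracting the $k$-th $2$-block of $v_{n-1}$, whose phase is $2\pi(k-1)/n$, and using the elementary identities $\Re\big[e^{j\psi}(I_2+jJ)w\big]=R(\psi)w$ and $R(\psi)U_c=\|U_c\|\,(\sin(\psi+\gamma),\cos(\psi+\gamma))^T$ for a $\gamma$ determined by $U_c$, one obtains $V_k^{c}(t)=r^V(\sin(\omega t+\alpha_k^V),\cos(\omega t+\alpha_k^V))^T$ with $r^V=|\beta|\,\|U_c\|/\sqrt n$ common to all agents and $\alpha_k^V=\arg\beta+\gamma+2\pi(k-1)/n$, whence $\alpha_{k+1}^V-\alpha_k^V=2\pi/n$.

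The main difficulty is not any single hard estimate but keeping the Kronecker bookkeeping and the conjugacy relations honest: one must correctly identify the two non-decaying spectral projectors in block form, use $\zeta_{n-1}^+=\overline{\zeta_1^-}$ (and the analogous $\zeta_0^-=\overline{\zeta_0^+}$) to fold each conjugate pair of modes into a single real part, and verify that these two persistent contributions --- a constant drift $V^{(1)}$ and, at criticality, an undamped rotation $V^{(2)}$ --- really constitute the entire limit, because every other mode lies strictly in the open left half-plane by Corollary \ref{eig-M-2}. The remaining steps (evaluating the integral, differentiating, and the trigonometric repackaging of $R(\psi)U_c$) are routine and may be relegated to an appendix, as was done for the analogous computation leading to (\ref{P-circ}).
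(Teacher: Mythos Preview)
Your proof is correct and follows essentially the same strategy as the paper: split the velocity into the zero-mode contribution $V^{(1)}$, the imaginary-axis contribution $V^{(2)}$ (present only at $|\theta|=\theta_c$), and the decaying remainder $V^{(3)}$; then evaluate the two persistent pieces via the spectral projectors of $\hat M$. Your treatment of $V^{(1)}$ via $\Pi_0=\frac{1}{n}(\mathbf 1_n\mathbf 1_n^T)\otimes I_2$ is exactly the computation the paper carries out in (\ref{eq-p-u1})--(\ref{eq-p-u1-2}).

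The only difference worth noting is in part 2. The paper observes that $V^{(u_2)}(t)=\frac12[e^{\mu_{n-1}^+t}\zeta_{n-1}^+(\zeta_{n-1}^+)^*+e^{\mu_1^-t}\zeta_1^-(\zeta_1^-)^*]\hat B U_c$ has exactly the same structure as the expression (\ref{P-circ}) for $P^{(h_1)}(t)$ with $\hat B U_c$ replacing $P(0)$, and then simply invokes the appendix computation (Appendix~\ref{proof-circ}) ``by similarity'' to read off $r^V$, $\omega$ and the phase increment $2\pi/n$. You instead carry out the computation directly: fold the conjugate pair into a real part, factor $\zeta_{n-1}^+(\zeta_{n-1}^+)^*=(v_{n-1}v_{n-1}^*)\otimes(I_2+jJ)$ via the mixed-product rule, isolate the scalar $\beta=v_{n-1}^*B$, and use $\Re[e^{j\psi}(I_2+jJ)]=R(\psi)$ to get the rotation. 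This is a cleaner and more self-contained route to the same conclusion, and it makes the dependence $r^V=|\beta|\,\|U_c\|/\sqrt n$ explicit (equivalent to the paper's $r^V=\sqrt{(c_1^V)^2+(c_2^V)^2}$), whereas the paper's ``by similarity'' argument requires the reader to revisit the appendix.
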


\begin{proof}\mbox{}\\*

\begin{enumerate}

\item \underline{Derivation of $V_i^{(u_1)}(t)$}

\begin{equation}\label{eq-p-u1}
 P^{(u_1)}(t) = \frac{1}{2}\left[ \int_0^{t} (\zeta_0^+ (\zeta_0^+)^* +  \zeta_0^-  (\zeta_0^-)^* )\mathrm{d}\nu \right ] \hat{B} U_c 
\end{equation}
 $\hat{B}=B \otimes I_{2 \times 2}$.

\begin{equation}\label{eq-p-u1-1}
  P^{(u_1)}(t) = \frac{1}{2} t (\zeta_0^+ (\zeta_0^+)^* +  \zeta_0^-  (\zeta_0^-)^* )\hat{B} U_c
\end{equation}
Thus 
\begin{equation}\label{eq-V-u1-1}
  V^{(u_1)}(t) = \frac{1}{2}  (\zeta_0^+ (\zeta_0^+)^* +  \zeta_0^-  (\zeta_0^-)^* )\hat{B} U_c
\end{equation}
Using again $Y^{(0)}=  \zeta^+_0(\zeta^+_0)^* $ we obtain
\begin{equation}\label{eq-p-u1-2}
  V^{(u_1)}(t) =  \Re(Y^{(0)}) \hat{B} U_c 
\end{equation}
where $\Re(Y^{(0)})$ is given by (\ref{ReY}).

$$\Rightarrow  V_i^{(u_1)}(t \to \infty) = \frac{n_l}{n} U_c; \quad i=1,\dots,n$$

\item \underline{Derivation of $V_i^c(t)$}

\begin{equation}\label{eq-p-u2}
 P^{(u_2)}(t) = \frac{1}{2}\left[ \int_0^{t} (e^{\mu^+_{n-1}\nu}\zeta_{n-1}^+ (\zeta_{n-1}^+)^* + e^{\mu^-_{1}\nu} \zeta_1^-  (\zeta_1^-)^* )\mathrm{d}\nu \right ] \hat{B} U_c 
\end{equation}
Thus
\begin{eqnarray}
V^{(u_2)}(t) &=& \frac{d}{dt}  P^{(u_2)}(t)\\
   &=& \frac{1}{2}\left[ e^{\mu^+_{n-1}t}\zeta_{n-1}^+ (\zeta_{n-1}^+)^* + e^{\mu^-_{1}t} \zeta_1^-  (\zeta_1^-)^*  \right ] \hat{B} U_c \label{V-circ}
\end{eqnarray}
We note the similarity of eq. (\ref{V-circ}) to (\ref{P-circ}) with $\hat{B} U_c$ replacing $P(0)$. 
Therefore, we can write (by similarity)
\begin{equation}\label{Vi-circ}
V_i^c(t):= V_i^{(u_2)}(t \to \infty)= \left [ \begin{matrix}
r^V\sin(\omega t + \alpha_i^V)\\
r^V \cos(\omega t  + \alpha_i^V)
\end{matrix} \right]
\end{equation}
where
\begin{itemize}
\item $\displaystyle \omega=2 \sin(\frac{\pi}{n})$
\item $r^V = \sqrt{(c_1^V)^2+(c_2^V)^2}$, such that
\begin{eqnarray*}
c_1^V &=&\frac{1}{n}\sum_{l=1}^n b_l \left [ \begin{matrix}\cos(\frac{2 \pi}{n}l);  &  -\sin(\frac{2 \pi}{n}l) \end{matrix} \right ]  U_c\\
c_2^V &=&\frac{1}{n}\sum_{l=1}^n  b_l \left [ \begin{matrix}\sin(\frac{2 \pi}{n}l);  &  \cos(\frac{2 \pi}{n}l) \end{matrix} \right ] U_c
\end{eqnarray*}
\item $\displaystyle \alpha_{i+1}^V=\alpha_i^V +\frac{2 \pi}{n}$
\end{itemize}

\end{enumerate}
\end{proof}

\subsubsection{Contribution of remaining eigenvalues: $\Delta_P^{(u)}(t \to \infty)$}\label{Derive-Dev}

The eigenvalues whose contribution has not been considered so far are \textbf{the eigenvalues with negative real part}, causing element $\Delta_P^{(u)}$ in (\ref{eq-Pu}),  due to the integration in $P^{(u)}(t)$, eq. (\ref{eq-Put}). These eigenvalues are
\begin{itemize}
\item  all eigenvalues other than $\mu_0^{\pm}$, if $\theta < \theta_c$
\item all eigenvalues other than $\mu_0^{\pm}, \mu_1^-, \mu_{n-1}^+$ , if $\theta = \theta_c$ 
\end{itemize}
Let $p_c(t)$ denote the moving centroid at time $t \to \infty$, such that
$$p_c(t)=p_c(0)+\frac{n_l}{n}U_c$$
where $p_c(0)$ is the centroid of the initial positions. Then 
\begin{equation}\label{p_i-d}
p^d_i(t)=p_c(t)+\Delta_{p_i}^{(u)}
\end{equation}
where $\Delta_{p_i}^{(u)}; \quad i \in [1,\dots,n]$, such that $\Delta_P^{(u)}=(\Delta_{p_1}^{(u)}, \dots, \Delta_{p_n}^{(u)})^T $, represents the asymptotic deviation  of agent $i$ from $p_c(t)$. 
\begin{itemize}
\item if $\theta < \theta_c$ then $p_i^d(t)$ is the  position of  agent $i$ at time $t \to \infty$ 
\item if $\theta = \theta_c$ then $p_i^d(t)$ is the center of the circular orbit for agent $i$  at time $t \to \infty$
\end{itemize}
   
\begin{lem}\label{L-Delta-p_u}
Let $p^d_i(t)$ be defined as in eq. (\ref{p_i-d}). Then $p^d_i(t); i= 1, \dots, n$ are aligned on a line rotated by angle $-\theta$ from the direction of $U_c$.  The dispersion of the agents along this line is a function of the set of agents detecting the exogenous control,$U_c$, represented by the vector $B$.

\end{lem}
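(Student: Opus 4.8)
The plan is to evaluate $\Delta_P^{(u)}(t\to\infty)$ in closed form and then use the block-circulant structure of $\hat M$ to show that the asymptotic deviations of consecutive agents differ by a scalar multiple of the single fixed vector $R(-\theta)U_c$; collinearity of the $p_i^d(t)$ is then automatic, and the explicit deviations come out as functions of $B$.

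First I would pass to the limit in the expression for $\Delta_P^{(u)}(t)$. Writing $e^{\hat M s}=\sum_{i,\pm}\tfrac12 e^{\mu_i^\pm s}\zeta_i^\pm(\zeta_i^\pm)^*$ and keeping only the modes with $\Re\mu_i^\pm<0$, we have $\Delta_P^{(u)}(t)=\tfrac12\sum_{\mathrm{neg}}\big(\int_0^t e^{\mu_i^\pm(t-\tau)}\,d\tau\big)\zeta_i^\pm(\zeta_i^\pm)^*\hat BU_c$, and since $\int_0^t e^{\mu(t-\tau)}\,d\tau=\frac{e^{\mu t}-1}{\mu}\to-\frac1\mu$ when $\Re\mu<0$,
\[
Q:=\Delta_P^{(u)}(\infty)=-\sum_{\mathrm{neg}}\frac{1}{2\mu_i^\pm}\,\zeta_i^\pm(\zeta_i^\pm)^*\,\hat BU_c .
\]
Because $\hat M$ is normal with $\hat M\zeta_i^\pm=\mu_i^\pm\zeta_i^\pm$, applying $\hat M$ cancels every $1/\mu$, and using the spectral resolution $\sum_{i,\pm}\tfrac12\zeta_i^\pm(\zeta_i^\pm)^*=I$ gives $\hat M Q=-\big(I-\Pi_0-\Pi_{\mathrm{im}}\big)\hat BU_c$, where $\Pi_0$ projects onto $\ker\hat M=\mathrm{span}\{\zeta_0^\pm\}$ and $\Pi_{\mathrm{im}}$ onto the imaginary eigenpair $\mu_1^-,\mu_{n-1}^+$ (nonzero only if $\theta=\theta_c$).

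Next I would read this block by block. One has $[\hat BU_c]_i=b_iU_c$; from the form (\ref{ReY}) of $\Pi_0=\Re(Y^{(0)})$, $[\Pi_0\hat BU_c]_i=\tfrac1n\sum_l b_lU_c=\tfrac{n_l}{n}U_c$; and for $\theta<\theta_c$ one has $\Pi_{\mathrm{im}}=0$. Since $\hat M$ is block-circulant as in (\ref{M_hat}), $[\hat M Q]_i=R(\theta)(Q_{i+1}-Q_i)$, so
\[
R(\theta)(Q_{i+1}-Q_i)=\Big(\tfrac{n_l}{n}-b_i\Big)U_c
\quad\Longrightarrow\quad
Q_{i+1}-Q_i=\Big(\tfrac{n_l}{n}-b_i\Big)R(-\theta)U_c ,\qquad i=1,\dots,n .
\]
Thus every consecutive difference is a real scalar multiple of the one fixed vector $R(-\theta)U_c$, hence the points $p_i^d(t)=p_c(t)+Q_i$ are collinear on the line through the moving centroid with direction $R(-\theta)U_c$, i.e. the line rotated by $-\theta$ from $U_c$. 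To exhibit the dispersion explicitly and see that it depends only on $B$, I would invoke $\Pi_0Q=0$ (since $Q$ lies in the range of $I-\Pi_0-\Pi_{\mathrm{im}}$), which reads $\sum_l Q_l=\mathbf{0}$; telescoping the recursion subject to this anchor yields $Q_i=\sigma_i R(-\theta)U_c$ with $\sigma_i=\sum_{m=1}^{i-1}\big(\tfrac{n_l}{n}-b_m\big)-\tfrac1n\sum_{l=1}^{n}\sum_{m=1}^{l-1}\big(\tfrac{n_l}{n}-b_m\big)$, a number fixed by the leader set $B$ alone.

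The delicate point, which I expect to be the bulk of the work, is the critical case $\theta=\theta_c$, where $\Pi_{\mathrm{im}}\neq0$ and $[\Pi_{\mathrm{im}}\hat BU_c]_i$ carries the rotating structure of the circular mode rather than being proportional to $U_c$. The resolution is careful bookkeeping: the term $P^{(u_2)}(t)$ splits into a bounded oscillatory part — which merely rescales the radius and shifts the phase of the common circular orbit $p_i^r(t)$, and is the source of the $U_c$- and $B$-dependence of $r^V$ — and a constant part $D=-\tfrac12\big(\tfrac1{\mu_{n-1}^+}\zeta_{n-1}^+(\zeta_{n-1}^+)^*+\tfrac1{\mu_1^-}\zeta_1^-(\zeta_1^-)^*\big)\hat BU_c$, which is part of the centre of each agent's orbit, so that there $p_i^d(t)=p_c(t)+(Q+D)_i$. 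Using $\hat M\big(\tfrac1{2\mu}\zeta\zeta^*\big)=\tfrac12\zeta\zeta^*$ for the two imaginary modes one gets $\hat M D=-\Pi_{\mathrm{im}}\hat BU_c$, hence $\hat M(Q+D)=-\hat BU_c+\Pi_0\hat BU_c$ exactly as in the subcritical case, and the telescoping argument goes through verbatim with some $\tilde\sigma_i(B)$ in place of $\sigma_i$, giving collinearity of the orbit centres on the line rotated by $-\theta$ from $U_c$.
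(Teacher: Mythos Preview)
Your proof is correct, and it takes a genuinely different route from the paper's. The paper proceeds purely algebraically: it writes the asymptotic deviation as $\Delta_P^{(u_1)}=\tfrac12\sum_{i\ge1}[\tfrac{1}{\mu_i^+}\zeta_i^+(\zeta_i^+)^*+\tfrac{1}{\mu_i^-}\zeta_i^-(\zeta_i^-)^*]\hat BU_c$, then uses the Kronecker factorisations $\mu_i^\pm=\lambda_i e^{\pm j\theta}$ and $\zeta_i^\pm=v_i\otimes(1,\pm j)^T$ to collapse each bracket to $\tfrac{2}{\lambda_i}v_iv_i^*\otimes R(-\theta)$, so that $R(-\theta)$ factors out globally and $\Delta_{p_i}^{(u_1)}=\sigma_i R(-\theta)U_c$ with $\sigma=(\gamma M^TB)$. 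You instead avoid the Kronecker algebra entirely: you hit the limit $Q$ with $\hat M$, reduce to $\hat MQ=-(I-\Pi_0)\hat BU_c$ via the spectral resolution, and then read off the block recursion $R(\theta)(Q_{i+1}-Q_i)=(\tfrac{n_l}{n}-b_i)U_c$ directly from the block-circulant form of $\hat M$. Your argument is more elementary and yields an explicit closed form for the $\sigma_i$ by telescoping; the paper's argument, on the other hand, makes the appearance of the rotation $R(-\theta)$ look inevitable rather than a consequence of a recursion. In the critical case you are in fact slightly more careful than the paper: you correctly observe that the constant piece $D$ of $P^{(u_2)}(t)$ also contributes to the orbit centre, and you absorb it by noting $\hat M(Q+D)=-(I-\Pi_0)\hat BU_c$, so the same recursion applies to $Q+D$; the paper handles this case by subtracting the two imaginary-mode terms from the full sum and asserting a modified $\tilde\gamma$, which amounts to the same conclusion but leaves the role of $D$ implicit.
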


\begin{proof}\

Consider the case \underline{\textbf{ $\theta < \theta_c$ }}. Then we have
\begin{eqnarray*}
  \Delta_p^{(u)}(t) & = &\frac{1}{2 } \left[ \sum_{i=1}^{n-1} (\int_0^{t}( \zeta_i^+ e^{\mu_i^+ \nu} (\zeta_i^+)^* +  \zeta_i^- e^{\mu_i^-\nu} (\zeta_i^-)^*) \mathrm{d}\nu )  \right ]\hat{B} U_c\\
      & = & \frac{1}{2} \sum_{i=1}^{n-1} [\frac{1}{\mu^+_i}(1-e^{\mu_i^+ t})\zeta_i^+(\zeta_i^+)^* + \frac{1}{\mu^-_i}(1-e^{\mu_i^- t})\zeta_i^-(\zeta_i^-)^*   ]\hat{B} U_c\\
      & \triangleq & \Delta_P^{(u_1)}+\Delta_P^{(u_2)}(t)\label{eq-Delta-p_u}
\end{eqnarray*}
where
\begin{eqnarray}
\Delta_P^{(u_1)} & = &  \frac{1}{2} \sum_{i=1}^{n-1}[\frac{1}{\mu^+_i}\zeta_i^+(\zeta_i^+)^* + \frac{1}{\mu^-_i}\zeta_i^-(\zeta_i^-)^*   ]\hat{B} U_c\label{eq-Delta-p_u1}\\
\Delta_P^{(u_2)}(t)  & =& - \frac{1}{2} \sum_{i=1}^{n-1}[\frac{1}{\mu^+_i}e^{\mu_i^+ t}\zeta_i^+(\zeta_i^+)^* + \frac{1}{\mu^-_i}e^{\mu_i^- t}\zeta_i^-(\zeta_i^-)^*   ]\hat{B} U_c\label{eq-Delta-p_u2}
\end{eqnarray}
Since $\Re(\mu_i)<0; i=1, \dots, n-1$  we have $\Delta_P^{(u_2)}(t) \xrightarrow{t \to \infty} \mathbf{0}_{2 n}$ and therefore $$\Delta_P^{(u)}(t) \xrightarrow{t \to \infty} \Delta_P^{(u_1)}$$.

Recalling that (see  Appendix \ref{eig-hat_M-1})
\begin{eqnarray*}
\mu_i^{\pm} & = &\lambda_i e^{\pm j \theta)}\\
\zeta_i^\pm  & = & v_i \otimes (1,\pm j)^T
\end{eqnarray*}
where $ \lambda_i, v_i; \quad i=0, \dots, n-1$ are the eigenvalues and eigenvectors of $M$, respectively, and
 using the property of product of two Kronecker products, eq. (\ref{Kron3}), we have 
\begin{equation*}
\frac{1}{\mu^+_i}e^{\mu_i^+ t}\zeta_i^+(\zeta_i^+)^*  =  \frac{e^{-j\theta}}{\lambda_i}v_i v_i^* \otimes \left[ 
\begin{matrix}
1& -j\\
j & 1
\end{matrix} \right ]
\end{equation*}
Similarly
\begin{equation*}
\frac{1}{\mu^-_i}e^{\mu_i^- t}\zeta_i^-(\zeta_i^-)^*  =  \frac{e^{j\theta}}{\lambda_i}v_i v_i^* \otimes \left[ 
\begin{matrix}
1& j\\
-j & 1
\end{matrix} \right ]
\end{equation*}
Let 
\begin{equation}\label{def-A}
A = e^{-j \theta}
             \begin{bmatrix}
           1 & -j \\
           j & 1
          \end{bmatrix}
\end{equation}

Then,
\begin{equation*}
\frac{1}{\mu^+_i}e^{\mu_i^+ t}\zeta_i^+(\zeta_i^+)^* + \frac{1}{\mu^-_i}e^{\mu_i^- t}\zeta_i^-(\zeta_i^-)^*  =  \frac{2}{\lambda_i}v_i v_i^* \otimes \Re(A)
\end{equation*}
where
 \begin{align}
\Re(A) =
          \begin{bmatrix}
           \cos(\theta) & -\sin(\theta) \\
           \sin(\theta) & \cos(\theta)
          \end{bmatrix}
          = R(-\theta)
  \end{align}
where $R(\theta)$ is the rotation matrix defined in (\ref{eq-R}).
Thus, we can rewrite eq. 
(\ref{eq-Delta-p_u1}) as
\begin{eqnarray}
  \Delta_P^{(u_1)} &=&  \sum_{i=1}^{n-1} (\frac{1}{\lambda_i}  v_i v_i^* \otimes R(-\theta)) (B \otimes I) U_c\\
            &=&  ([\sum_{i=1}^{n-1} \frac{1}{\lambda_i}  v_i v_i^*]B) \otimes R(-\theta) U_c\label{Dev-vec}
\end{eqnarray}
 Given $\lambda_i, v_i; i=0, \dots, n-1$ as defined in (\ref{eig-cyclic}), (\ref{eig-vec}) respectively and $M =circ(-1,1, 0, \dots, 0)$ we have 
  \begin{equation*}
  v_i^* M^T= \overline{\lambda_i}  v_i^*
  \end{equation*}
Thus, we can rewrite (\ref{Dev-vec}) as
\begin{eqnarray}
 \Delta_P^{(u_1)} & = & ([\sum_{i=1}^{n-1} \frac{1}{\lambda_i \overline{\lambda_i}}  v_i v_i^*] M^T B) \otimes R(-\theta) U_c \\
 & := & (\gamma  M^T B) \otimes R(-\theta) U_c\label{Dev-vec-gamma}
 \end{eqnarray}
where 
\begin{equation}\label{eq-gamma}
\gamma = [\sum_{i=1}^{n-1} \frac{1}{\lambda_i \overline{\lambda_i}}  v_i v_i^*]
\end{equation}
is an $n \times n$ matrix common to all agents.
Let 
\begin{equation}\label{eq-sigma}
\sigma = \gamma M^T B
\end{equation}

 Then we have 
\begin{equation}\label{Dev-p_i}
 \Delta_{p_i}^{(u_1)} = \sigma_i R(-\theta) U_c
\end{equation}
where $\sigma_i$ is a scalar. Thus, the agents asymptotically align in a linear formation rotated by an angle $-\theta$ from the direction of $U_c$ . The dispersion of the agents along this line is a function of the set of agents detecting the exogenous control, represented by $B$.

Consider now the case \underline{\textbf{ $\theta = \theta_c$ }}.\\
In this case there are two additional eigenvalues of $\hat{M}$ with zero real part, $\mu_1^-$ and $\mu_{n-1}^+$, and following the same procedure as above, we have 
\begin{eqnarray*}
\Delta_P^{(u_1)} & = &   \frac{1}{2} \left [  \sum_{i=1}^{n-1}[\frac{1}{\mu^+_i}\zeta_i^+(\zeta_i^+)^* -+\frac{1}{\mu^-_i}\zeta_i^-(\zeta_i^-)^*   ] - \frac{1}{\mu^-_1}\zeta_1^-(\zeta_1^-)^* - \frac{1}{\mu_{n-1}^+}\zeta_{n-1}^+(\zeta_{n-1}^+)^* \right ] \hat{B} U_c\\
  & = & (\tilde{\gamma} M^T B) \otimes R(-\theta) U_c
\end{eqnarray*}
where $$\tilde{\gamma} = \gamma - \frac{1}{\lambda_1 \overline{\lambda_1}}  v_1 v_1^*$$ and $\gamma$ is given by (\ref{eq-gamma}).
\begin{equation}\label{Dev-p_i-2}
 \Delta_{p_i}^{(u_1)} = \tilde{\sigma}_i R(-\theta) U_c
\end{equation}
where
\begin{equation}\label{eq-tilde-sigma}
 \tilde{\sigma} = \tilde{\gamma} M^T B
\end{equation}
\end{proof}

\begin{remark}\
If $B=\mathbf{1}_n$, i.e. the signal $U_c$ is detected by all the agents, then $\Delta_{p_i}^{(u)}=0; \forall i$  and therefore $p_i^d(t)=p_c(t); \forall i$. This is due to  $M^T B = \mathbf{0}_n$ in the expression for $\sigma$ and $\tilde{\sigma}$, eqs. (\ref{eq-sigma}) and (\ref{eq-tilde-sigma}), respectively. 
\end{remark}

\section{ Demonstration by Simulation }\label{Sim} 
In this section we illustrate by simulation the impact of the deviation angle $\theta$, of the broadcast control, $U_c(t)$, and of the set of leaders (agents detecting the broadcast control) on the emergent behavior of a group of  agents in linear, deviated, cyclic pursuit. We recall that the set of agents detecting the broadcast control is represented by the vector $B$, such that $B(i)=1$ if agent $i$ detected the broadcast signal and $B(i)=0$ otherwise.

We present results for 
\begin{enumerate}
\item Single time interval
\begin{enumerate}
\item Autonomous deviated linear cyclic pursuit, without external control
\item The broadcast control and the set of leaders are constant in the time interval, 
i.e. $U_c(t)=U_c, B(t)=B$ for $t \in [0,T_{max}]$
\end{enumerate} 
\item Multiple time intervals, where the time line $[0,T_{max}]$ is divided in piecewise constant time intervals $[t_k, t_{k+1}]; \quad k=0, ....m-1$, where $m$ is the number of intervals and
\begin{itemize}
\item $U_c(t) = U_c(t_k)$ if $t \in [t_k,t_{k+1})$
\item $B(t) = B(t_k)$ if $t \in [t_k,t_{k+1})$
\item The start state of a new interval is the end state of the previous interval 
\end{itemize}
\end{enumerate} 

In all shown simulation results we assumed $n=5$, where $n$ is the number of agents. 

\subsection{Autonomous deviated cyclic pursuit}
In this section we illustrate the emergent behavior of the autonomous deviated cyclic pursuit, i.e. with $U_c=(0,0)^T$, as a function of the deviation angle $\theta$, corresponding to the analytically obtained results (see section \ref{ZeroInp}).
We recall that $\displaystyle \theta_c=\frac{2 \pi}{n}$.  Therefore, for $n=5$ we have $\theta_c=36deg$.

We show, starting from two initial topologies, shown in Figures \ref{Fig-Init-n5-1}, (see Example1)  and  \ref{Fig-Init-n5-2},(see Example2), that 
\begin{itemize}
\item if $\theta < \theta_c$  then the agents gather to $p_c$, the  centroid of the agents' initial positions, see Figures \ref{Fig-Traj-n5-1-th20-u0} and \ref{Fig-Traj-n5-Ex2-th20-u0}. In these figures
\begin{itemize}
\item $p_c$ is the displayed initial centroid
\item $p_i(T_{max}):=(x_i(T_{max}, y_i(T_{max}))^T$
\item $p_i(T_{max}) ==p_c \quad \forall i $, as expected. 
\end{itemize}
\item if $\theta = \theta_c$ then the agents will rotate around $p_c$ on a common orbit with a radius (displayed as $r$) depending on the initial positions of the agents, see Figures \ref{Fig-Traj-n5-1-th36-u0} and \ref{Fig-Traj-n5-Ex2-th36-u0}.

In these figures we show the trajectories of the agents,\\ $p_i(t); i=1, \dots, n; t \in [0, T_{max}]$ and the circular component of the trajectories (shown in black). We note that the center of the circular orbit is $p_c$, as expected.
\end{itemize}

\subsubsection{Example1}
\begin{figure}[H]
  \centering
    \includegraphics[scale=0.7]{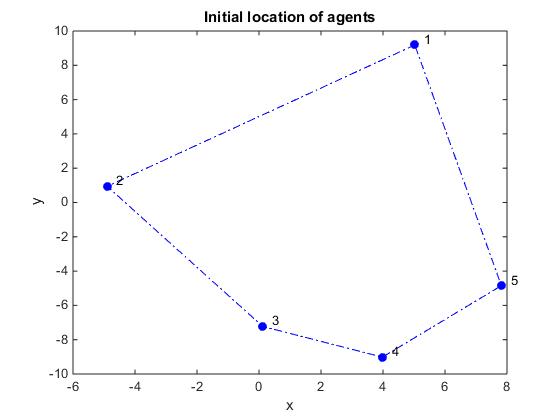}
    \caption{Example1 - Initial positions of the agents }\label{Fig-Init-n5-1}
\end{figure}

\begin{figure}[H]
  \centering
    \includegraphics[scale=0.7]{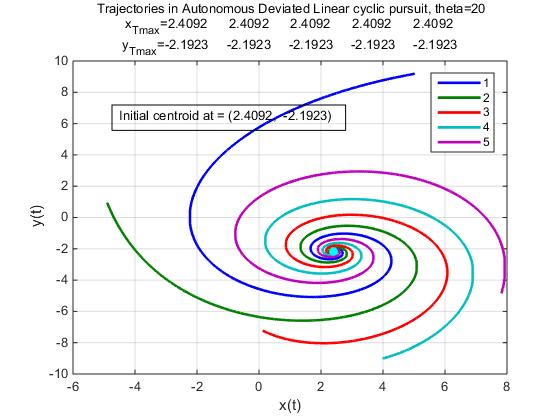}
    \caption{$\theta <\theta_c$ - Agents gathering to the centroid of the initial positions }\label{Fig-Traj-n5-1-th20-u0}
\end{figure}

\begin{figure}[H]
  \centering
    \includegraphics[scale=0.7]{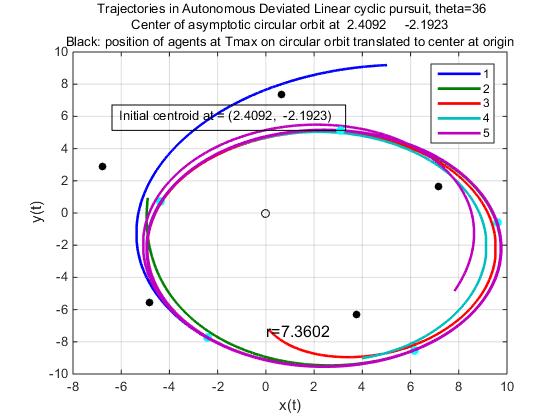}
    \caption{$\theta =\theta_c$ - Agents rotating around the centroid of the initial positions }\label{Fig-Traj-n5-1-th36-u0}
\end{figure}

\subsubsection{Example2}
\begin{figure}[H]
  \centering
    \includegraphics[scale=0.7]{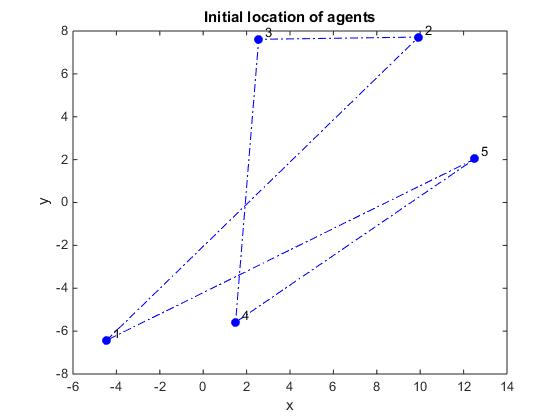}
    \caption{Example2 - Initial positions of the agents }\label{Fig-Init-n5-2}
\end{figure}

\begin{figure}[H]
  \centering
    \includegraphics[scale=0.7]{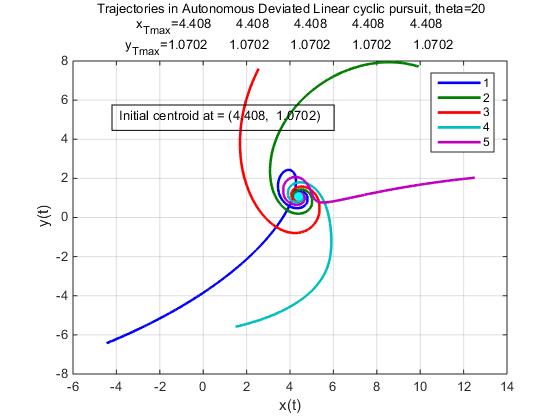}
    \caption{$\theta <\theta_c$ - Agents gathering to the centroid of the initial positions }\label{Fig-Traj-n5-Ex2-th20-u0}
\end{figure}

\begin{figure}[H]
  \centering
    \includegraphics[scale=0.7]{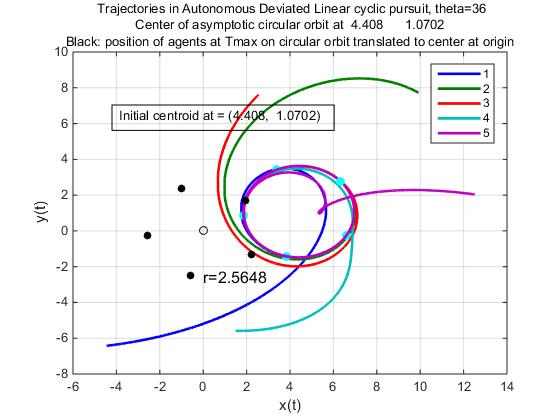}
    \caption{$\theta =\theta_c$ - Agents rotating around the centroid of the initial positions }\label{Fig-Traj-n5-Ex2-th36-u0}
\end{figure}

\subsection{The impact of the broadcast control and of the random set of leaders }
The remaining simulations presented in this section were run with the initial locations shown in Figure \ref{Fig-Init-n5-2}.

In all the figures included in this section 
\begin{itemize}
\item a solid line represents an ad-hoc leader (an agent that detects the broadcast control)
\item a dotted line represents a follower (an agent that does not detect the broadcast control)
\end{itemize}

\subsubsection{Single time interval}
We recall that the emergent behavior, analytically derived in section \ref{P_t_inf}, is an asymptotic behavior. Thus, in order to compare the numerical results, obtained by simulation to the analytically derived results we need the simulation time, $T_{max}$ to be long enough to approximate  $t \rightarrow \infty$. We found that $T_{max} = 60$ is such a time. Therefore all presented simulations were run with $T_{max} = 60$ (Points=60000, dT=0.001). However, for visualization purposes all trajectories and velocities are shown for the first part of the data, up to  $t=10$.
   
For the simulations presented in this section we used  $U_c=(2,3)^T$ and we considered the following cases
\begin{enumerate}
\item $\theta=20deg (< \theta_c)$
\begin{enumerate}
\item  $U_c$ is detected by all the agents, i.e. $B=\mathbf{1}_n$
\item $U_c$ is detected only by agents 2 and 5, i.e $B=(0,1,0,0,1)^T$
\end{enumerate}
\item $\theta=36deg (= \theta_c)$
\begin{enumerate}
\item  $U_c$ is detected by all the agents, i.e. $B=\mathbf{1}_n$
\item $U_c$ is detected only by agents 2 and 5, i.e $B=(0,1,0,0,1)^T$
\end{enumerate}
\end{enumerate}

\newpage
\underline{Case 1.1: $\theta = 20deg, \quad B=\mathbf{1}_n$ }
\begin{figure}[H]
  \centering
    \includegraphics[scale=0.75]{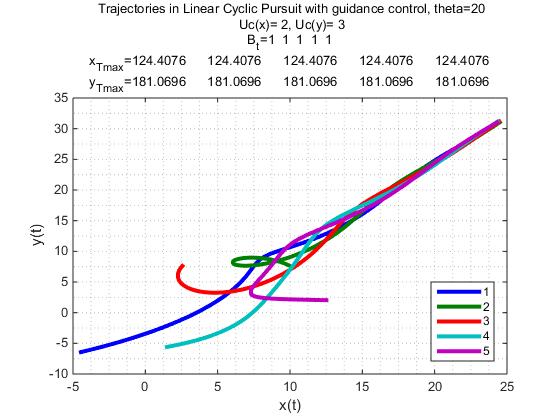}
    \caption{Case 1.1 - Emergent trajectories  }\label{Fig-Traj-Ball}
\end{figure}

\begin{figure}[H]
  \centering
    \includegraphics[scale=0.75]{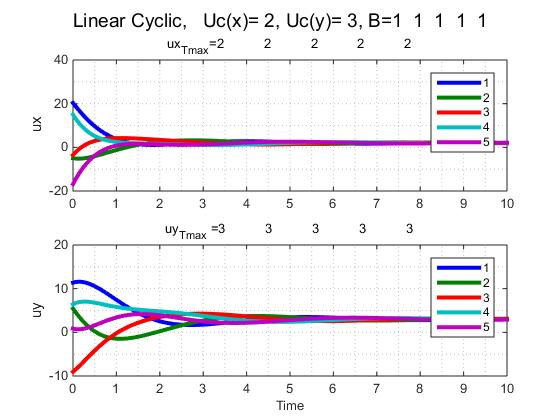}
    \caption{Case 1.1 - Emergent velocities }\label{Fig-Vel-Ball}
\end{figure}

In this case all the agents converge to a point (see $x_{T_{max}},y_{T_{max}}$ in Fig. \ref{Fig-Traj-Ball})  that moves with asymptotic velocity $U_c $,(see $ux_{T_{max}},uy_{T_{max}}$ displayed in Fig. \ref{Fig-Vel-Ball}), as expected. 

\newpage
\underline{Case 1.2: $\theta = 20deg, \quad B=(0,1,0,0,1)^T$ }

\begin{figure}[H]
  \centering
    \includegraphics[scale=0.75]{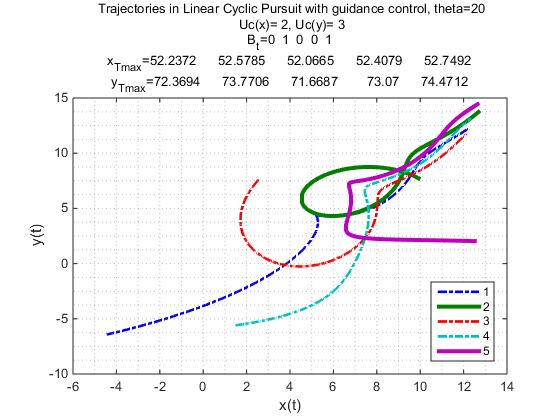}
    \caption{Case1.2 - Emergent trajectories }\label{Fig-Traj-B01001}
\end{figure}

\begin{figure}[H]
  \centering
    \includegraphics[scale=0.75]{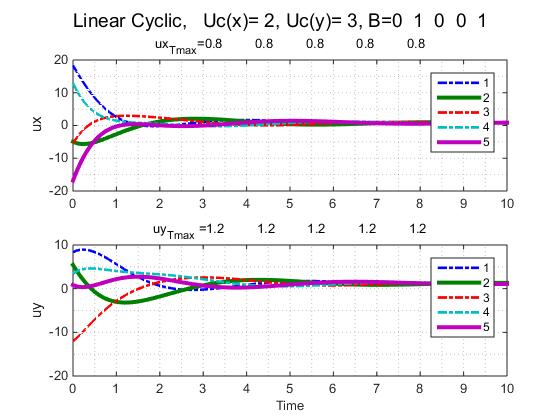}
    \caption{Case1.2 - Emergent velocities  }\label{Fig-Vel-B01001}
\end{figure}

In this case 
\begin{itemize}
\item the number of ad-hoc leaders, $n_l$ is two.
\item the velocity of each agent asymptotically converges to $\displaystyle \frac{n_l}{n}U_c$,\\ see $(ux_{T_{max}},uy_{T_{max}})$ in Fig.\ref{Fig-Vel-B01001}
\item the emergent trajectories of the agents are parallel lines in the direction of $U_c$, anchored at $p_c+\Delta_{p_i}^{(u_1)}$, (see Fig. \ref{Fig-Traj-B01001}) where $Delta_{p_i}^{(u_1)}$ is the asymptotic, time independent, deviation given by eq. (\ref{Dev-p_i})
\end{itemize} 

We complete the demonstration of the emergent behavior of the agents in case $\theta < \theta_c, B \neq \mathbf{1}_n$ by showing the deviations of the agents from a moving center for $\theta=20 \text{deg   vs   } 0 deg$, see Fig. \ref{Fig-Align-B01001-th20vs0}. We observe that in both cases the agents are aligned in a linear formation. While for $\theta=0$ the formation is aligned with $U_c$ for $\theta=20$the formation is aligned with $R(-\theta) U_c $, where $R$ denotes the rotation matrix, complying with eq.(\ref{Dev-p_i}).

\begin{figure}[H]
  \centering
    \includegraphics[scale=0.75]{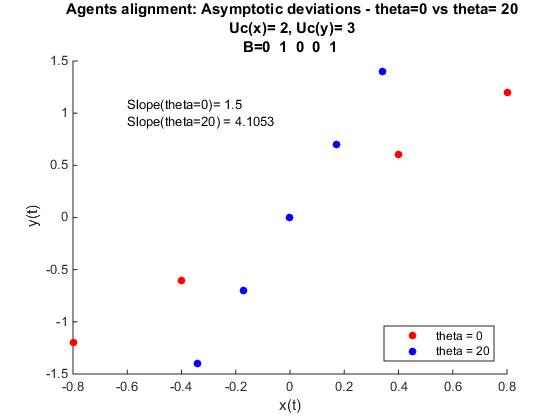}
    \caption{Agents' asymptotic linear formation, $\theta=20 \text{deg   vs   0deg}$ }\label{Fig-Align-B01001-th20vs0}
\end{figure}

In this case
\begin{equation*}
R(-20)=\left[
\begin{matrix}
0.9397 &  -0.3420\\
 0.3420 &   0.9397
\end{matrix} \right ]
\end{equation*}
Thus, $R(-\theta) U_c$ has a slope= 4.1053, as indicated.  

\newpage
\underline{Case 2.1: $\theta = \theta_c = 36 deg, \quad B=\mathbf{1}_n$ }
\begin{figure}[H]
  \centering
    \includegraphics[scale=0.75]{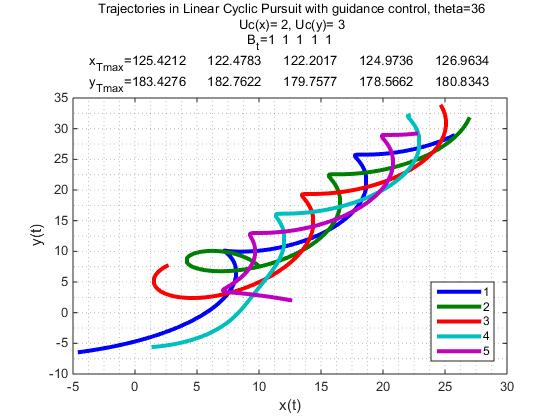}
    \caption{Case 2.1 - Emergent trajectories  }\label{Fig-Traj-Ball-th_c}
\end{figure}

\begin{figure}[H]
  \centering
    \includegraphics[scale=0.75]{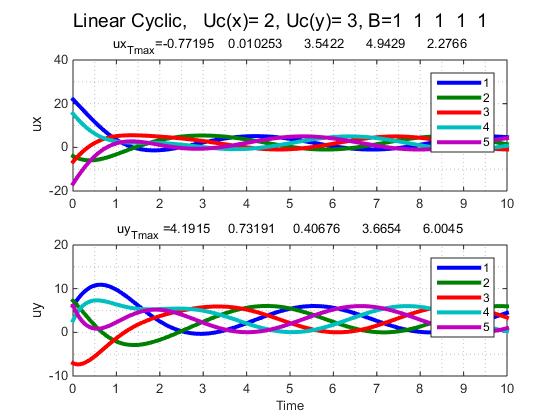}
    \caption{Case 2.1 - Emergent velocities }\label{Fig-Vel-Ball-th_c}
\end{figure}

\newpage
\underline{Case 2.2: $\theta = \theta_c = 36 deg, \quad B=(0,1,0,0,1)^T$ }

\begin{figure}[H]
  \centering
    \includegraphics[scale=0.75]{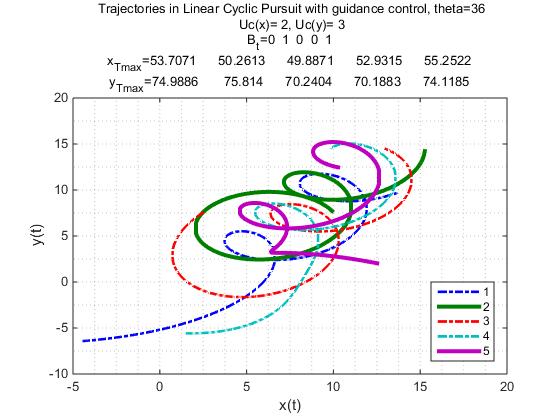}
    \caption{Case 2.2 - Emergent trajectories } \label{Fig-Traj-B01001-th_c}
\end{figure}

\begin{figure}[H]
  \centering
    \includegraphics[scale=0.75]{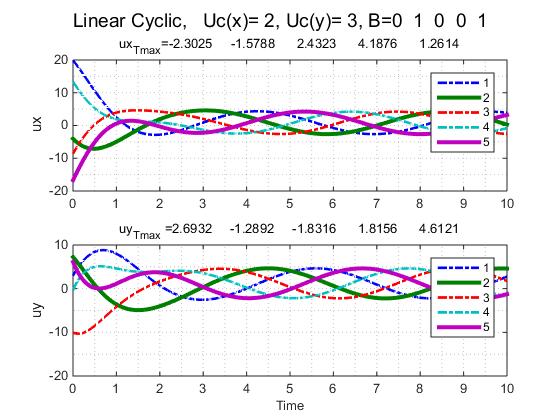}
    \caption{Case 2.2 - Emergent velocities}\label{Fig-Vel-B01001-th_c}
\end{figure}

\subsubsection{Multiple piecewise constant intervals}\label{Sim-multi}
A new interval starts upon a change of  broadcast velocity control, $U_c$, or a change in the set of agents detecting it, represented by the vector $B(t)$. The initial state of each interval is the end state of the previous interval.

In this section we show the emergent behavior of the same group of five agents, starting at the same initial positions, Fig. \ref{Fig-Init-n5-2}, over multiple intervals, with piecewise constant $U_c(t)$ and $B(t)$.  

We show two cases of  piecewise constant systems, $U_c(t)=(u_x(t),u_y(t))^T$ and $B(t)$, and for each system we show the emergent behavior, trajectories and velocities, for  $\theta < \theta_c$ and $\theta = \theta_c$. 

\newpage
\begin{enumerate}[label=\textbf{CaseM.\arabic*}]
\item $U(t)$ as shown in Fig. \ref{Fig-MultiUc-20-Ex1}, $B(t)$ as follows
\begin{itemize}
\item for $t \in [0,45)$ the set of agents detecting the broadcast control is $\{2,5\}$, i.e. $B(t)=(0,1,0,0,1)^T$
\item for $t \in [45,60]$ the set of  agents detecting the broadcast control is "all", i.e. $B(t) =\mathbf{1}_5$
\end{itemize}
\item $U(t)$ as shown in Fig. \ref{Fig-MultiUc-30-Ex2}, $B(t)$ as follows
\begin{itemize}
\item for $t \in [0,45)$ the set of agents detecting the broadcast control is $\{2,5\}$, i.e. $B(t)=(0,1,0,0,1)^T$
\item for $t \in [45,52.5)$ the set of  agents detecting the broadcast control is "all", i.e. $B(t) =\mathbf{1}_5$
\item for $t \in [52.5,60]$ the set of agents detecting the broadcast control is again $\{2,5\}$, i.e. $B(t)=(0,1,0,0,1)^T$
\end{itemize}

\end{enumerate}

We observe that the emergent behavior over multiple time intervals is a concatenation of the single intervals, as expected.

\begin{figure}[H]
  \centering
    \includegraphics[scale=0.65]{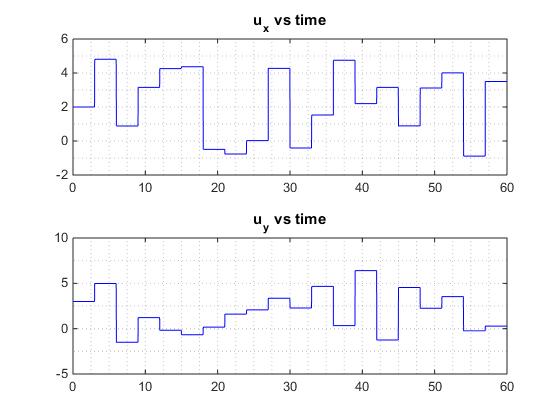}
    \caption{Piecewise constant broadcast control, $U_c(t)$ - \textbf{CaseM.1}}\label{Fig-MultiUc-20-Ex1}
\end{figure}

\begin{figure}[H]
  \centering
    \includegraphics[scale=0.65]{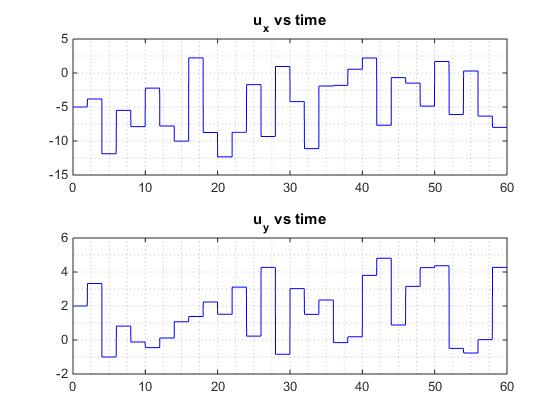}
    \caption{Piecewise constant broadcast control, $U_c(t)$ - \textbf{CaseM.2}}\label{Fig-MultiUc-30-Ex2}
\end{figure}

\subsubsection{CaseM.1.1 - CaseM.1 with  $\theta < \theta_c$ }

\begin{figure}[H]
  \centering
    \includegraphics[scale=0.85]{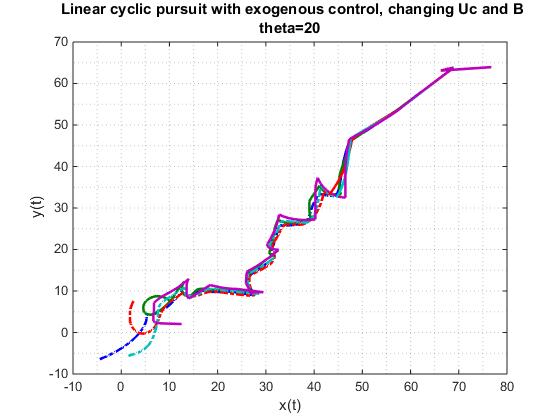}
    \caption{CaseM.1.1 - Emergent trajectories }\label{Fig-Traj-multi20u2B}
\end{figure}

\begin{figure}[H]
  \centering
    \includegraphics[scale=0.85]{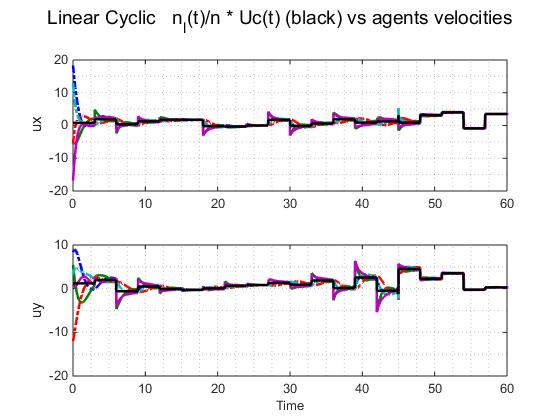}
    \caption{CaseM.1.1 - Emergent velocities  }\label{Fig-Vel-multi20u2B}
\end{figure}

\subsubsection{CaseM.1.2 - CaseM.1 with $\theta = \theta_c$ }

\begin{figure}[H]
  \centering
    \includegraphics[scale=0.85]{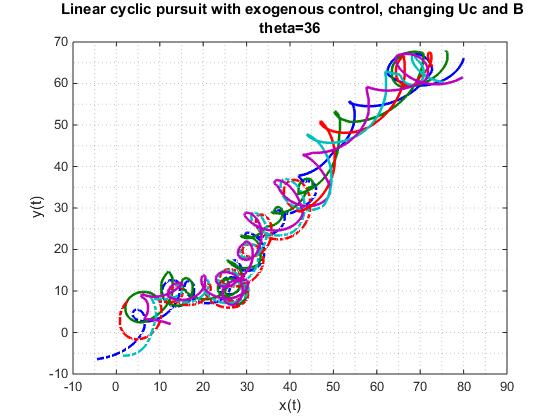}
    \caption{CaseM.1.2 - Emergent trajectories}\label{Fig-Traj-multi20u2B-thc}
\end{figure}

\begin{figure}[H]
  \centering
    \includegraphics[scale=0.85]{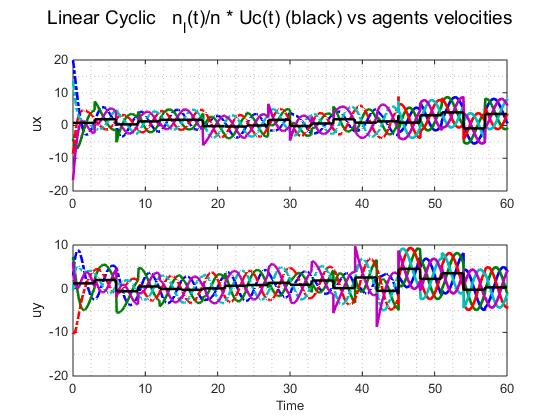}
    \caption{CaseM.1.2 - Emergent velocities }\label{Fig-Vel-multi20u2B-thc}
\end{figure}

\subsubsection{CaseM.2.1 - CaseM.2 with  $\theta < \theta_c$ }

\begin{figure}[H]
  \centering
    \includegraphics[scale=0.85]{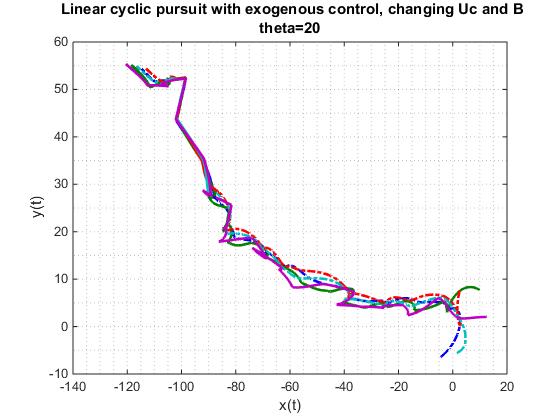}
    \caption{CaseM.2.1 - Emergent trajectories }\label{Fig-Traj-multi30u3B-Ex2}
\end{figure}

\begin{figure}[H]
  \centering
    \includegraphics[scale=0.85]{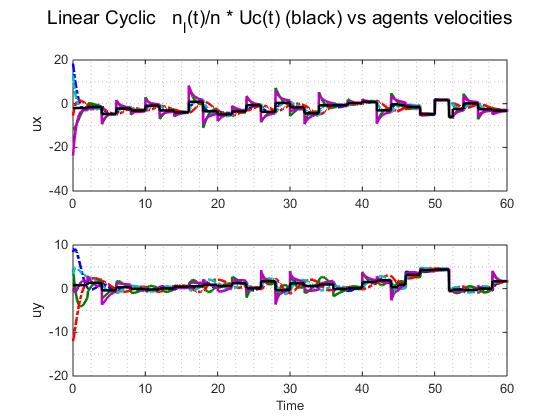}
    \caption{CaseM.2.1 - Emergent velocities  }\label{Fig-Vel-multi30u3B-Ex2}
\end{figure}

\subsubsection{CaseM.2.2 - CaseM.2 with $\theta = \theta_c$ }

\begin{figure}[H]
  \centering
    \includegraphics[scale=0.85]{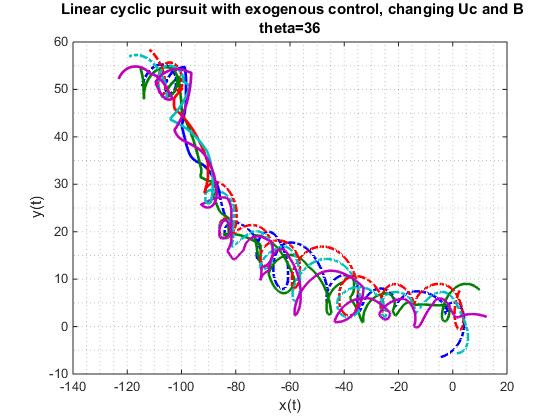}
    \caption{CaseM.2.2 - Emergent trajectories}\label{Fig-Traj-multi30u3B-thc-Ex2}
\end{figure}

\begin{figure}[H]
  \centering
    \includegraphics[scale=0.85]{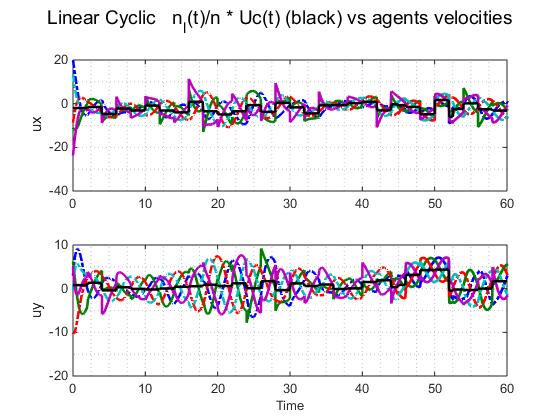}
    \caption{Example 2.2 - Emergent velocities }\label{Fig-Vel-multi30u3B-thc-Ex2}
\end{figure}

We note that what seems like an irregular circular movement in case $\theta=\theta_c$ and $B \neq \mathbf{1}_n$, see Fig.  \ref{Fig-Traj-multi20u2B-thc},  \ref{Fig-Traj-multi30u3B-thc-Ex2}, is actually due to changes in the deviation of the centers upon change in $U_c(t)$. (We recall that the deviation of the centers is given by eq. \ref{Dev-p_i}: $ \Delta_{p_i}^{(u_1)} = \sigma_i R(-\theta) U_c $). To demonstrate this claim we show the emergent behavior with the same $U_c(t)$ (see Fig. \ref{Fig-MultiUc-30-Ex2})  in case $B =\mathbf{1}_n$, when the centers overlap, i.e. there are no deviations.
\begin{figure}[H]
  \centering
    \includegraphics[scale=0.85]{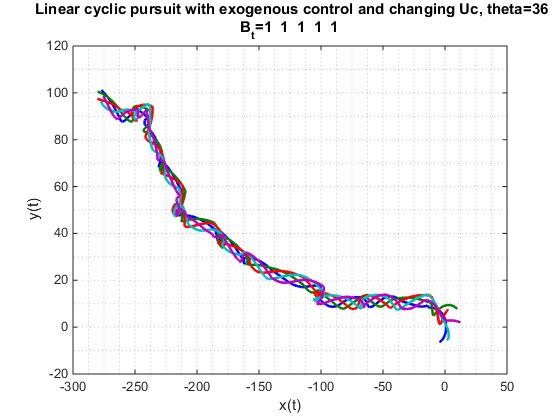}
    \caption{ Emergent trajectories when $\theta=\theta_c$ and $B=\mathbf{1}_n$}\label{Fig-Traj-multi30u1Ball-thc-Ex2}
\end{figure}

\begin{figure}[H]
  \centering
    \includegraphics[scale=0.85]{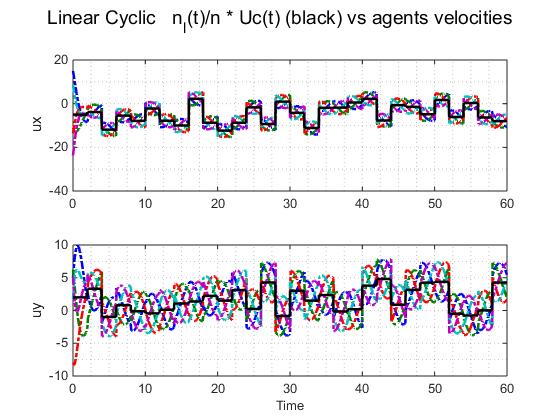}
    \caption{Emergent velocities when $\theta=\theta_c$ and $B=\mathbf{1}_n$ }\label{Fig-Vel-multi30u1Ball-thc-Ex2}
\end{figure}

\section{Summary and discussion}
In this work we apply well known LTI systems evolution theory to a \emph{novel paradigm} where agents in deviated linear cyclic pursuit are exposed to a piecewise constant broadcast control, detected by a random subset  
of agents. The emergent behavior of the agents is rigorously derived for a single time interval and illustrated by simulations for multiple time intervals, a concatenation of single time intervals.
The emergent behavior of the swarm is 
 shown to be a function of the deviation angle, $\theta$, the critical deviation angle $\displaystyle \theta_c=\frac{\pi}{n}$ and of the (random) subset of agents detecting the broadcast control (ad-hoc leaders). 
We show that the emergent pattern of movement depends only on the deviation angle $\theta$,  such that 
\begin{itemize}
\item if $\theta < \theta_c$ the movement is linear and the asymptotic velocity of the agents is $\displaystyle \frac{n_l}{n}U_c$, where $U_c$ is the broadcast control and $\displaystyle \frac{n_l}{n}$ is the ratio of agents detecting it. 
\begin{itemize}
\item If $\displaystyle \frac{n_l}{n}=1$ then the agents gather and move as a single point. 
\item If $\displaystyle \frac{n_l}{n}<1$ then the agents will asymptotically move as a time independent linear formation rotated by $R(-\theta)$ from the direction of $U_c$.
\end{itemize}
\item if $\theta =\theta_c$ the movement is circular around a center moving with velocity $\displaystyle \frac{n_l}{n}U_c$. The radius of the circular orbit is a function of the number of agents and the initial positions and is common to all agents.  
\begin{itemize}
\item If $\displaystyle \frac{n_l}{n}=1$  then the centers coincide. In this case the agents move, equally spaced, on a common orbit.  
\item If $\displaystyle \frac{n_l}{n}< 1$ then the centers move on parallel lines, such that at each point in time, $t$, they are placed on a line rotated by $R(-\theta)$ from the direction of $U_c$. The dispersion of the centers on this line is time-independent.
\end{itemize}

\end{itemize}

We plan in the future to investigate the emergent behavior of non-linear  agents, sensing bearing-only (bugs), in deviated cyclic pursuit, under the influence of broadcast control detected by a random subset of agents.

\newpage

\begin{appendices}
\section{General results on matrices}\label{matrices}
Following \cite{HJbook}, let $\mathbf{M}_n$ denote the class of all $n \times n$ matrices. Two matrices $A, B \in M_n$ are similar, denoted by $A \sim B$, if there exists an invertible (non-singular) matrix $S \in M_n$ s.t. $A = SBS^{-1}$. Similar matrices are just different basis representation of a single linear transformation.Similar matrices have the same characteristic polynomial, c.f. Theorem 1.3.3 in \cite{HJbook} and therefore the same eigenvalues

\subsection{Algebraic and geometric multiplicity of eigenvalues}\label{EigMultiplicity}

Let $\lambda$ be an eigenvalue of an arbitrary matrix $A \in M_n$ with an associated eigenvector $v \in \mathbb{C}^n$.\\
\textbf{Definitions:}
\begin{itemize}
  \item The spectrum of $A \in M_n$  is the set of all the eigenvalues of $A$, denoted by $\sigma ( A )$.
  \item The spectral radius of $A$ is $\rho ( A ) = max {| \lambda | : \lambda \in \sigma ( A ) }$.
  \item  For a given $\lambda \in \sigma(A)$ , the set of all vectors $v \in \mathbb{C}^n$ satisfying $A v = \lambda v$ is called the eigenspace of $A$ associated with the eigenvalue $\lambda$ . Every nonzero element of this eigenspace is an eigenvector of $A$ associated with $\lambda$
  \item The \emph{algebraic multiplicity} of $\lambda$ is its multiplicity as a root of the characteristic polynomial $det(\lambda I - A)$
  \item The \emph{geometric multiplicity} of  $\lambda$ is the dimension of the eigenspace associated with $\lambda$, i.e. the number of linearly independent eigenvectors associated with that eigenvalue.
  \item  We say that $\lambda$ is simple if its algebraic multiplicity is 1; it is semisimple if its algebraic and geometric multiplicities are equal.
  \item The algebraic multiplicity of an eigenvalue is larger than or equal to its geometric multiplicity.
  \item  We say that $A$ is defective if the geometric multiplicity of some eigenvalue is less than its algebraic multiplicity
\end{itemize}

\subsection{Diagonizable matrices}\label{DiagMat}

\textbf{Definition}: If $A \in M_n$ is similar to a diagonal matrix, then $A$ is said to be diagonizable

\begin{theorem}\label{th-diagonizable}
See Theorem 1.3.7 in \cite{HJbook}.

The matrix $A \in M_n$ is diagonizable iff there are $n$ linearly independent vectors, $v^{(1)},v^{(2)}, ....,v^{(n)}$,  each of which is an eigenvector of $A$. If $v^{(1)},v^{(2)}, ....,v^{(n)}$ are linearly independent eigenvectors of $A$ and $S=[v^{(1)},v^{(2)}, ....,v^{(n)} ]$ then $S^{-1} A S$ is a diagonal matrix $\Lambda$ and the diagonal entries of $\Lambda$ are the eigenvalues of $A$
\end{theorem}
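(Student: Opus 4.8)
The plan is to prove both implications by reading the matrix product $AS$ one column at a time. The single ingredient I would isolate first is the elementary identity: if $S = [v^{(1)} \mid \cdots \mid v^{(n)}]$ has columns $v^{(k)} \in \mathbb{C}^n$, then $AS = [Av^{(1)} \mid \cdots \mid Av^{(n)}]$, while for a diagonal matrix $\Lambda = \operatorname{diag}(\lambda_1,\dots,\lambda_n)$ one has $S\Lambda = [\lambda_1 v^{(1)} \mid \cdots \mid \lambda_n v^{(n)}]$, i.e.\ post-multiplication by $\Lambda$ scales the $k$-th column by $\lambda_k$. Everything else follows by comparing columns.

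For the ``if'' direction I would assume $A$ possesses $n$ linearly independent eigenvectors $v^{(1)},\dots,v^{(n)}$, say $Av^{(k)} = \lambda_k v^{(k)}$, and set $S = [v^{(1)} \mid \cdots \mid v^{(n)}]$. Linear independence of the columns makes $S$ invertible (a standard fact about square matrices). Column-by-column comparison then gives $AS = S\Lambda$ with $\Lambda = \operatorname{diag}(\lambda_1,\dots,\lambda_n)$, hence $S^{-1}AS = \Lambda$ and $A$ is diagonizable. For the converse I would assume $S^{-1}AS = \Lambda$ for some invertible $S$ and diagonal $\Lambda = \operatorname{diag}(\lambda_1,\dots,\lambda_n)$, rewrite this as $AS = S\Lambda$, and read off from the $k$-th columns that $A v^{(k)} = \lambda_k v^{(k)}$, where $v^{(k)}$ denotes the $k$-th column of $S$; since $S$ is invertible each $v^{(k)} \neq 0$, so the $v^{(k)}$ are genuine eigenvectors, and they are linearly independent because $S$ is nonsingular.

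To finish, the assertion that the diagonal entries of $\Lambda$ are exactly the eigenvalues of $A$ follows either from $A \sim \Lambda$ together with the fact that similar matrices share the same characteristic polynomial (Theorem 1.3.3 in \cite{HJbook}, cited above), or directly from the relations $A v^{(k)} = \lambda_k v^{(k)}$ with $v^{(k)} \neq 0$. I expect no real obstacle: the argument is purely bookkeeping, and the only points needing care are the column-wise product identities and the equivalence ``a square matrix is invertible $\iff$ its columns are linearly independent.''
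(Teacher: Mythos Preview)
Your proof is correct and is the standard argument. Note that the paper does not actually supply its own proof of this theorem: it merely cites Theorem~1.3.7 in \cite{HJbook} and states the result. Your column-by-column reading of $AS = S\Lambda$ is precisely the proof given in Horn and Johnson, so there is nothing to compare.
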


\begin{lemma}\label{distinct-k}
Let $\lambda_1,...,\lambda_k; \quad k \geq 2$ be distinct eigenvalues of $A \in M_n$ and suppose $v^{(i)}$ is an eigenvector associated with $\lambda_i; \quad i=1,....,n$. Then the vectors $[v^{(1)},v^{(2)}, ....,v^{(k)} ]$ are linearly independent.
\end{lemma}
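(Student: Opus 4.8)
The plan is to argue by induction on the number $k$ of distinct eigenvalues involved. For the base case $k=1$ there is nothing to prove beyond the observation that an eigenvector is by definition a \emph{nonzero} vector, so the singleton $\{v^{(1)}\}$ is trivially linearly independent. (The statement restricts to $k \geq 2$, but it is convenient to start the induction at $k=1$.)

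For the inductive step, assume the claim holds for any family of $k-1$ eigenvectors attached to $k-1$ distinct eigenvalues, and suppose $\sum_{i=1}^{k} c_i v^{(i)} = \mathbf{0}$ for scalars $c_1,\dots,c_k$. First I would apply $A$ to this relation and use $A v^{(i)} = \lambda_i v^{(i)}$ to get $\sum_{i=1}^{k} c_i \lambda_i v^{(i)} = \mathbf{0}$. Next I would multiply the original relation by $\lambda_k$ and subtract it from the previous one; the $k$-th term cancels, leaving $\sum_{i=1}^{k-1} c_i (\lambda_i - \lambda_k) v^{(i)} = \mathbf{0}$. By the induction hypothesis, $v^{(1)},\dots,v^{(k-1)}$ are linearly independent, hence $c_i(\lambda_i - \lambda_k) = 0$ for every $i \leq k-1$; since the eigenvalues are pairwise distinct, $\lambda_i - \lambda_k \neq 0$, so $c_i = 0$ for $i = 1,\dots,k-1$. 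Substituting back into the original relation yields $c_k v^{(k)} = \mathbf{0}$, and because $v^{(k)} \neq \mathbf{0}$ we conclude $c_k = 0$ as well, so all coefficients vanish.

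There is no genuine obstacle here: this is the standard "eliminate one eigenvector at a time" argument, and the only points requiring care are that an eigenvector is nonzero (which is what turns $c_k v^{(k)} = \mathbf{0}$ into $c_k = 0$) and that the distinctness hypothesis is exactly what makes the scalar factors $\lambda_i - \lambda_k$ nonzero. An alternative route would be to apply the Lagrange interpolation polynomials $p_j$ satisfying $p_j(\lambda_i) = \delta_{ij}$ to the dependence relation, but the inductive proof is shorter and self-contained. This lemma then combines with Theorem \ref{th-diagonizable} to show that any $A \in \mathbf{M}_n$ with $n$ distinct eigenvalues is diagonizable.
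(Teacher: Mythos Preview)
Your proof is correct and is precisely the standard inductive argument. The paper itself does not give a proof at all but merely cites Lemma~1.3.8 in \cite{HJbook}; the argument there is exactly the one you have written, so your proposal matches the intended proof.
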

Proof of Lemma 1.3.8 in \cite{HJbook}

\begin{theorem}\label{distinct-all}
If $A \in M_n$ has $n$ distinct eigenvalues, then $A$ is diagonizable
\end{theorem}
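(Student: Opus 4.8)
The plan is to obtain this as an immediate consequence of the two preceding results, Lemma \ref{distinct-k} and Theorem \ref{th-diagonizable}. First I would invoke the hypothesis: $A \in M_n$ has $n$ distinct eigenvalues, which I would list as $\lambda_1, \dots, \lambda_n$. For each index $i$, the eigenvalue $\lambda_i$ has geometric multiplicity at least one (its eigenspace $\{v : Av = \lambda_i v\}$ is nonzero by definition of ``eigenvalue''), so I may select a nonzero eigenvector $v^{(i)}$ with $A v^{(i)} = \lambda_i v^{(i)}$.

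Next I would apply Lemma \ref{distinct-k} with $k = n$: since $\lambda_1, \dots, \lambda_n$ are pairwise distinct and each $v^{(i)}$ is an eigenvector associated with $\lambda_i$, the lemma gives that the family $v^{(1)}, v^{(2)}, \dots, v^{(n)}$ is linearly independent. Thus $A$ possesses $n$ linearly independent eigenvectors in $\mathbb{C}^n$.

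Finally I would close the argument by citing Theorem \ref{th-diagonizable}: a matrix in $M_n$ having $n$ linearly independent eigenvectors is diagonizable (concretely, with $S = [v^{(1)}, \dots, v^{(n)}]$ one has $S^{-1} A S = \Lambda$, the diagonal matrix whose entries are $\lambda_1, \dots, \lambda_n$). This yields the claim.

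I do not expect any real obstacle here, since the statement is essentially a corollary; the only point meriting a word of care is the remark that distinctness of the eigenvalues is sufficient but not necessary for diagonizability (e.g. the identity matrix), so the converse is false and need not be addressed. The substantive content has already been carried out in Lemma \ref{distinct-k} and Theorem \ref{th-diagonizable}, so the proof is just the assembly of those two facts.
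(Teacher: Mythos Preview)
Your proposal is correct and follows essentially the same route as the paper's proof: invoke Lemma \ref{distinct-k} (with $k=n$) to obtain $n$ linearly independent eigenvectors, then apply Theorem \ref{th-diagonizable} to conclude diagonizability. Your write-up is somewhat more explicit about selecting the eigenvectors and notes the non-necessity of the hypothesis, but the argument is the same.
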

\begin{proof}
Since all eigenvalues are distinct Lemma \ref{distinct-k} ensures that the associated eigenvectors are linearly independent and thus, according to Theorem \ref{th-diagonizable}, $A$ is diagonizable
\end{proof}
\textbf{Notes:}
\begin{enumerate}
  \item Having distinct eigenvalues is sufficient for diagonizability, but not necessary.
  \item A matrix is diagonizable iff it is non-defective, i.e. it has no eigenvalue with geometric multiplicity strictly less than its algebraic multiplicity
\end{enumerate}

\subsection{Left eigenvectors}\label{LeftEigvec}
\textbf{Definition}: A non-zero vector $y \in \mathcal{C}^n$ is a left eigenvector of $A \in M_n$ associated with eigenvalue $\lambda$ of $A$ if $y^*A = \lambda y^*$.
From \cite{HJbook}, Theorem 1.4.12, we have the following relationship between left and right eigenvectors and the multiplicities of the corresponding eigenvalue:
\begin{theorem}\label{left-right}
Let $\lambda \in \mathcal{C}$ be an eigenvalue of $A \in M_n$ associated with right eigenvector $x$ and left eigenvector $y^*$. Then the following hold:
\begin{enumerate}[label=(\alph*)]
\item If $\lambda$ has algebraic multiplicity 1, then $y^*x \neq 0$
\item If $\lambda$ has geometric multiplicity 1, then it has algebraic multiplicity 1 iff $y^*x \neq 0$
\end{enumerate}
\end{theorem}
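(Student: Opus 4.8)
The plan is to reduce everything to the Jordan canonical form of $A$, where the bilinear pairing $y^*x$ becomes transparent. Write $A = SJS^{-1}$ with $J$ in Jordan canonical form, $J=\mathrm{diag}(B_1,\dots,B_r)$ a block‑diagonal array of Jordan blocks, and order the blocks so that all Jordan blocks belonging to $\lambda$ come first. If $y^*A=\lambda y^*$, then multiplying the relation $y^*SJS^{-1}=\lambda y^*$ on the right by $S$ gives $(y^*S)J=\lambda(y^*S)$, so $z^*:=y^*S$ is a left eigenvector of $J$ for $\lambda$; likewise $w:=S^{-1}x$ satisfies $Jw=\lambda w$. Since $S^{-1}S=I$ we get the key identity $y^*x=z^*w$, so it suffices to understand the eigenvectors of the combinatorially simple matrix $J$. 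Because $J$ is block‑diagonal, the eigen‑relations decouple blockwise, and over any block $B_i$ whose eigenvalue $\mu_i\neq\lambda$ the matrix $B_i-\lambda I$ is upper triangular with nonzero diagonal, hence invertible, forcing the corresponding coordinates of both $w$ and $z^*$ to vanish. Thus only the coordinates inside the $\lambda$‑blocks matter.

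For part (a): if $\lambda$ has algebraic multiplicity $1$ then it has a single Jordan block, of size $1\times1$, occupying (say) coordinate $1$. Within that block the eigen‑relation imposes no constraint, so $w=c\,e_1$ and $z^*=d\,e_1^*$ with $c,d\neq0$ (nonzero since $w$ and $z^*$ are eigenvectors). Hence $y^*x=z^*w=cd\,(e_1^*e_1)=cd\neq0$. Note that here the geometric multiplicity is automatically $1$, so $x$ is unique up to scalar and the statement is unambiguous.

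For part (b): the implication ``algebraic multiplicity $1$ $\Rightarrow$ $y^*x\neq0$'' is exactly part (a) (the geometric‑multiplicity hypothesis is not even needed for this direction). For the converse I would prove the contrapositive: suppose the geometric multiplicity is $1$ while the algebraic multiplicity is $k\geq2$, so $\lambda$ has exactly one Jordan block $J_k(\lambda)$, placed in the top‑left $k\times k$ corner. Since $J_k(\lambda)-\lambda I$ is the nilpotent upshift, $Jw=\lambda w$ again forces $w=c\,e_1$. For the left eigenvector, writing $z^*=(z_1,\dots,z_n)$ and reading $z^*J=\lambda z^*$ coordinate by coordinate inside the block, each superdiagonal $1$ yields $z_{j-1}=0$ for $j=2,\dots,k$, so $z_1=\dots=z_{k-1}=0$; the coordinates outside the block already vanish. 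Therefore $y^*x=z^*w=c\,z_1=0$ because $k\geq2$. Hence $y^*x\neq0$ forces $k=1$, completing (b).

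The routine ingredients — existence of the Jordan form, the identity $y^*x=z^*w$, and the coordinate chase inside a single Jordan block — are all standard; the only point deserving care is the reduction to the $\lambda$‑blocks, i.e. the observation that a left (or right) eigenvector of the block‑diagonal matrix $J$ for $\lambda$ must be supported on the blocks whose eigenvalue equals $\lambda$. Once that is in hand the rest is bookkeeping, and no computation beyond the two‑line chase in the nilpotent block is needed.
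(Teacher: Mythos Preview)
Your argument via the Jordan canonical form is correct: the reduction $y^*x=z^*w$ with $z^*=y^*S$, $w=S^{-1}x$ is valid, the support of $z^*$ and $w$ is confined to the $\lambda$-blocks as you explain, and the coordinate chase inside a single Jordan block $J_k(\lambda)$ is accurate (right eigenvectors live in $\mathrm{span}\{e_1\}$, left eigenvectors in $\mathrm{span}\{e_k^*\}$, so their pairing vanishes when $k\geq 2$).

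However, there is nothing to compare against: the paper does not prove this theorem at all. It is stated in Appendix~A.3 purely as a quotation of Theorem~1.4.12 from Horn and Johnson's \emph{Matrix Analysis}, with no argument given. Your Jordan-form proof is in fact close in spirit to the approach Horn and Johnson take for this result, so you have essentially supplied the missing proof from the cited reference.
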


\subsection{Square matrices decomposition}\label{App-decomp}

\begin{itemize}

 \item  If $A \in \mathbf{M}_n$ with distinct eigenvectors (not necessarily distinct eigenvalues) then $A=V \Lambda V^{-1}$, where $\Lambda$ is a diagonal matrix formed from the eigenvalues of $A$, and the columns of $V$ are the corresponding eigenvectors of $A$.\\
     A matrix $A \in \mathbf{M}_n$ always has $n$ eigenvalues, which can be ordered (in more than one way) to form a diagonal matrix $\Lambda \in \mathbf{M}_n$ and a corresponding matrix of nonzero columns $V$ that satisfies the eigenvalue equation $AV=V\Lambda$. If the $n$ eigenvectors are distinct then $V$ is invertible, implying the decomposition $A=V \Lambda V^{-1}$.\\
     Comment: The condition of having $n$ distinct eigenvalues is sufficient but not necessary. The necessary and sufficient condition is for each eigenvalue to have geometric multiplicity equal to its algebraic multiplicity.
 \item If $A$ is real-symmetric its $n$  (possibly not distinct) eigenvalues are all real with geometric multiplicity which  equals the algebraic multiplicity.
 $V$ is always invertible and can be made to have normalized columns. Then the equation $VV^T=I$ holds, because each eigenvector is orthonormal to the other.
 Therefore the decomposition (which always exists if $A$ is real-symmetric) reads as: $A=V \Lambda V^T$. This is known as the \emph{the spectral theorem}, or
 \emph{symmetric eigenvalue decomposition} theorem.
 \item If $A \in \mathbf{M}_n$ is normal, i.e. $A A^T = A^T A$, then
  \begin{enumerate}
    \item There exists an orthonormal set of $n$ eigenvectors of $A$
    \item $A$ is unitarily diagonizable, i.e. $A = U \Lambda U^*$, where $U$ is a unitary matrix of eigenvectors and $\Lambda$ is a diagonal matrix of eigenvalues of $A$.
  \end{enumerate}
 \end{itemize}

\subsection{About non-symmetric real matrices}\label{non-sym}
\begin{itemize}
  \item the eigenvalues of non-symmetric real $n\times n$ matrix are real or come in complex conjugate pairs
  \item the eigenvectors are not orthonormal in general and may not even span an n-dimensional space
      \begin{itemize}
        \item Incomplete eigenvectors can occur only when there are degenerate eigenvalues, i.e. eigenvalues with algebraic multiplicity greater than 1, but do not always occur in such cases
        \item Incomplete eigenvectors never occur for the class of normal matrices
      \end{itemize}
  \item Diagonalization theorem: an $n\times n$ matrix $A$ is diagonizable iff $A$ has $n$ linearly independent eigenvectors
\end{itemize}

\subsection{Normal matrices}\label{Normal}
\begin{definition}\label{NormalMat}
A matrix $A \in M_n$ is called normal if $A^* A = A A^*$
\end{definition}
\begin{definition}\label{Hermitian}
A matrix $A \in M_n$ is called Hermitian if $A^* =  A$
\end{definition}
\begin{theorem}\label{SpectralNormal}
If $A \in M_n$ has eigenvalues $\lambda_1, \lambda_2,.....,\lambda_n$ the following statements are equivalent:
\begin{enumerate}[label=(\alph*)]
\item $A$ is normal
\item $A$ is unitarily diagonizable
\item $\sum_{i=1}^n \sum_{j=1}^n |a_{ij}|^2 = \sum_{j=1}^n |\lambda_j|^2$
\item There is an orthonormal set of $n$ eigenvectors of $A$
\end{enumerate}
\end{theorem}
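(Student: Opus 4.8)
The plan is to prove the cycle $(a)\Rightarrow(b)\Rightarrow(c)\Rightarrow(a)$ together with the equivalence $(b)\Leftrightarrow(d)$, with Schur's unitary triangularization theorem (cf. \cite{HJbook}) as the workhorse. Recall that Schur's theorem writes any $A\in M_n$ as $A=UTU^*$ with $U$ unitary and $T$ upper triangular, and that since $A$ and $T$ are similar they share a characteristic polynomial, so the diagonal entries of $T$ are precisely $\lambda_1,\dots,\lambda_n$. I will also use repeatedly that the Frobenius norm $\sum_{i,j}|a_{ij}|^2=\operatorname{tr}(AA^*)$ is invariant under conjugation by a unitary, which follows from the cyclic property of the trace.

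First I would handle $(a)\Rightarrow(b)$. Take a Schur decomposition $A=UTU^*$. If $A$ is normal then so is $T=U^*AU$, since $TT^*=U^*AA^*U=U^*A^*AU=T^*T$. The one genuinely computational step is the lemma that an upper triangular normal matrix is diagonal; I would prove it by induction on the row index, comparing the $(k,k)$ diagonal entries of $TT^*$ and $T^*T$: the $(1,1)$ comparison gives $\sum_{j\ge 1}|t_{1j}|^2=|t_{11}|^2$, killing the first row above the diagonal, and then the same comparison for the $(2,2)$ entry (using that the first column off the diagonal already vanishes) kills the second row, and so on. Hence $T$ is diagonal and $A=UTU^*$ is unitarily diagonalizable.

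Next, $(b)\Rightarrow(c)$ is a one-line trace computation: if $A=U\Lambda U^*$ with $\Lambda=\operatorname{diag}(\lambda_1,\dots,\lambda_n)$, then $\sum_{i,j}|a_{ij}|^2=\operatorname{tr}(AA^*)=\operatorname{tr}(\Lambda\Lambda^*)=\sum_j|\lambda_j|^2$. For $(c)\Rightarrow(a)$, take again a Schur decomposition $A=UTU^*$; unitary invariance of the Frobenius norm gives $\sum_{i,j}|a_{ij}|^2=\sum_{i\le j}|t_{ij}|^2=\sum_j|\lambda_j|^2+\sum_{i<j}|t_{ij}|^2$, so hypothesis $(c)$ forces $\sum_{i<j}|t_{ij}|^2=0$, i.e. $T$ is diagonal. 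A diagonal matrix is normal, and since $A=UTU^*$ and conjugation by a unitary preserves the relation $XX^*=X^*X$, $A$ is normal.

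Finally, $(b)\Leftrightarrow(d)$ falls straight out of the meaning of unitary diagonalization: writing $A=U\Lambda U^*$ as $AU=U\Lambda$ and reading off columns shows the columns of $U$ form an orthonormal set of $n$ eigenvectors; conversely, given $n$ orthonormal eigenvectors, placing them as the columns of a matrix $U$ produces a unitary $U$ with $AU=U\Lambda$, hence $A=U\Lambda U^*$. I expect the triangular-normal-is-diagonal lemma to be the only real obstacle, and it is entirely elementary; the rest is bookkeeping with Schur's theorem and the unitary invariance of $\|\cdot\|_F$. One caveat: this argument presupposes Schur's triangularization theorem, which is not proved in the excerpt but is standard and available in \cite{HJbook}; if one wanted a self-contained treatment one would prepend a short induction on $n$, peeling off one eigenvector at a time and completing it to an orthonormal basis.
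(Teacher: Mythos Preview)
Your proof is correct and follows the standard route via Schur triangularization; the triangular-normal-implies-diagonal lemma and the Frobenius-norm bookkeeping are both handled cleanly. However, there is nothing to compare against: the paper does not prove this theorem. It is stated in the appendix as a background fact lifted from \cite{HJbook} (Horn and Johnson), with no accompanying proof, and is only used implicitly to justify the unitary diagonalization of $\hat{M}$ in Section~\ref{prop-hat_M}. Your argument is essentially the textbook one from that reference, so in spirit it matches what the paper defers to.
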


\begin{remark}
 All normal matrices are diagonizable but not all diagonizable matrices are normal.
\end{remark}

\subsection{Unitary matrices and unitary similarity}\label{UniMat}
Unitary matrices, $U \in M_n$, are non-singular matrices such that  $U^{-1} = U^*$, i.e $U^*U =U U^* =I$. A real matrix $U \in M_n(\mathbb{R})$ is real orthogonal if $U^T U=I$.
The following are equivalent:
\begin{enumerate}[label=(\alph*)]
\item $U$ is unitary
\item $U$ is non-singular and $U^{-1} = U^*$
\item $U U^* =I$
\item $U^*$ is unitary
\item The columns of $U$ are orthonormal
\item The rows of $U$ are orthonormal
\item For all $x \in \mathcal{C}^n, \quad \|x\|_2=\|Ux\|_2$
\end{enumerate}
\begin{defn}:
\begin{itemize}
  \item $A$ is unitarily similar to $B$ if there is a unitary matrix $U$ s.t. $A=U B U^*$
  \item $A$ is unitarily diagonizable if it is unitarily similar to a diagonal matrix
\end{itemize}
\end{defn}

\section{ Kronecker product  properties}\label{Kron-def}
This section follows mostly reference \cite{Kron}.
\begin{defn}
The Kronecker product of the matrix $A \in \mathbf{M}^{p,q}$ with the
matrix $B \in \mathbf{M}^{r,s}$ is defined as
 \begin{align}
 A  \otimes B  =
             \begin{bmatrix}
           A_{11}B & A_{12}B & \hdots & A_{1q}B \\
           A_{21}B & A_{22}B & \hdots & A_{2q}B \\
	   &&\vdots \\
           A_{p1}S & A_{p2}B & \hdots & A_{pq}B \\
          \end{bmatrix}
  \end{align}

\end{defn}
\subsection{Basic Properties}\label{Kron-Prop}
The properties of the Kronecker product in this subsection are only those with bearing on this work. A complete list of properties can be found in \cite{Kron}.
\begin{itemize}
  \item Taking the transpose before carrying out the Kronecker product yields the same result as doing so afterwards, i.e.
      \begin{equation}\label{Kron1}
        (  A  \otimes B)^T =  A^T  \otimes B^T
      \end{equation}\label{Kron2}
  \item The product of two Kronecker products yields another Kronecker product:
  \begin{equation}\label{Kron3}
    (A \otimes B) (C \otimes D) = AC \otimes BD; \quad \forall A \in \mathbf{M}^{p,q}, B \in \mathbf{M}^{r,s}, C \in \mathbf{M}^{q,k}, D \in \mathbf{M}^{s,l}
  \end{equation}
  \item Taking the complex conjugate before carrying out the Kronecker product yields the same result as doing so afterwards, i.e.
      \begin{equation}
        ( A \otimes B)^* =  A^* \otimes B^*; \quad \forall A \in \mathbf{M}^{p,q}(\mathbb{C}), B \in \mathbf{M}^{r,s}(\mathbb{C})
      \end{equation}\label{Kron4}
  \item Denote by $\sigma(A)$  the spectrum of a square matrix $A$, i.e. the set of all eigenvalues of $A$. Then, Theorem \ref{eig-Kron} holds.
  \begin{thm} (Theorem 2.3 in \cite{Kron})\label{eig-Kron}
Let $A \in \mathbf{M}^m$ and $B \in \mathbf{M}^n$. Furthermore, let $\lambda \in \sigma(A)$ with corresponding eigenvector $x$ and let $ \mu \in \sigma(B)$ with corresponding eigenvector $y$. Then, $\lambda \mu$ is an eigenvalue of $A  \otimes B$ with corresponding eigenvector $x \otimes y$
\end{thm}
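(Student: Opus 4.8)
The plan is to derive the statement by a short computation resting entirely on the mixed-product property of the Kronecker product, namely (\ref{Kron3}): $(A \otimes B)(C \otimes D) = (AC) \otimes (BD)$ whenever the ordinary products $AC$ and $BD$ are defined. First I would observe that a column vector $x \in \mathbb{C}^m$ may be regarded as an element of $\mathbf{M}^{m,1}$ and $y \in \mathbb{C}^n$ as an element of $\mathbf{M}^{n,1}$, so that $x \otimes y \in \mathbf{M}^{mn,1}$ is again a column vector of the correct length to be multiplied by $A \otimes B \in \mathbf{M}^{mn,mn}$. With this identification the products $Ax$ and $By$ are defined, and so is $(Ax) \otimes (By)$, so (\ref{Kron3}) applies with $C = x$, $D = y$.

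Next I would apply (\ref{Kron3}) to obtain
\begin{equation*}
(A \otimes B)(x \otimes y) = (Ax) \otimes (By).
\end{equation*}
Substituting the eigenvalue relations $Ax = \lambda x$ and $By = \mu y$, and using that scalars may be pulled out of each slot of a Kronecker product (bilinearity in each argument), the right-hand side equals $(\lambda x) \otimes (\mu y) = \lambda\mu\,(x \otimes y)$. Hence $(A \otimes B)(x \otimes y) = \lambda\mu\,(x \otimes y)$.

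Finally, to be entitled to call $x \otimes y$ an eigenvector (rather than merely noting the identity above), I would check $x \otimes y \neq 0$. Since $x$ and $y$ are eigenvectors they are nonzero, so there are indices $i,j$ with $x_i \neq 0$ and $y_j \neq 0$; the corresponding entry $x_i y_j$ of $x \otimes y$ is then nonzero, so $x \otimes y \neq 0$. Thus $\lambda\mu$ is an eigenvalue of $A \otimes B$ with associated eigenvector $x \otimes y$, which is exactly the claim.

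I do not expect any real obstacle here: the only points requiring care are the dimensional bookkeeping — ensuring the block structure of $A \otimes B$ is compatible with viewing $x \otimes y$ as a stacked vector so that (\ref{Kron3}) is legitimately invoked — and recording the nonvanishing of $x \otimes y$, which is what makes the word ``eigenvector'' justified. (The stronger statement that the products $\lambda\mu$ exhaust the spectrum of $A \otimes B$, with the right multiplicities, would need additional argument, but the statement as given asserts only that each such $\lambda\mu$ is an eigenvalue, which the computation above settles.)
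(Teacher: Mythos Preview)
Your argument is correct and is the standard proof of this well-known fact about Kronecker products. The paper does not actually prove this theorem; it merely quotes it from the reference \cite{Kron}, so there is no ``paper's proof'' to compare against. Your use of the mixed-product identity (\ref{Kron3}) together with bilinearity and the nonvanishing check for $x\otimes y$ is exactly the expected derivation.
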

\end{itemize}

\section{Properties of circulant matrices}\label{circulant}
A circulant matrix is an $ n \times n$ matrix having the form
\begin{equation}\label{eq-circ}
   C  =
          \begin{bmatrix}
           c_0 & c_1 & c_2  & \hdots & c_{n-1} \\
             c_{n-1} & c_0 & c_1 & \hdots & c_{n-2} \\
	   &&\vdots \\
            c_1 & c_2  && \hdots  & c_0 \\
          \end{bmatrix}
  \end{equation}

which can also be characterized as an $ n \times n$  matrix $C$ with entry $(k,j)$ given by
\begin{equation*}
  C_{k,l}=c_{(l-k) mod (n)}
\end{equation*}

Every $ n \times n$  circulant matrix $C$ has eigenvectors  (cf. \cite{Gray}, \cite{RamirezPhD})
\begin{equation}\label{eig-vec}
  v_k=\frac{1}{\sqrt{n}} \left ( 1, e^{-2 \pi jk/n}, e^{-4 \pi jk/n},\hdots, e^{-2 \pi jk(n-1)/n}  \right)^T; \quad k \in \left \{0, 1, \hdots, n-1 \right \}
\end{equation}
where $ j=\sqrt{-1}$, with corresponding eigenvalues
\begin{equation}\label{eig}
  \lambda_k = \sum_{l=0}^{n-1} c_l e^{-2 \pi j lk/n}
\end{equation}

From the definition of eigenvalues and eigenvectors we have
\begin{equation*}
  C v_k = \lambda_k v_k; \quad k= 0,1, \hdots, n-1
\end{equation*}
which can be written as a single matrix equation
\begin{equation*}
  C U= U \Lambda
\end{equation*}
where $\Lambda = diag(\lambda_k); \quad k=0, \hdots, n-1$ and
\begin{eqnarray*}
  U &=& \left [v_0, v_1, \hdots, v_{n-1} \right ] \\
   &=& \frac{1}{\sqrt{n}}\left [e^{-2\pi j mk/n}; \quad m,k = 0, \hdots, n-1 \right ]
\end{eqnarray*}
$U$ is a unitary matrix, i.e. $ UU^* = U^*U=I$ (cf. \cite{Gray}, proof by direct computation) and
\begin{equation}\label{C-diagonalization}
  C=U \Lambda U^*
\end{equation}
Note that $F_n = \sqrt{n} U^*$ is the known Fourier matrix.

\subsection{Cyclic linear pursuit}\label{cyclic}
The matrix $M$, representing cyclic pursuit,  is a special case of circulant matrix (see eq. (\ref{eq-M})
$M = circ(-1,1,0,0, \dots, 0)$.

Thus, using (\ref{eig}), the eigenvalues of $M$ are
\begin{equation}\label{eig-cyclic}
  \lambda_k = -1+  e^{-2 \pi j k/n}; \quad k=0, ..., n-1
\end{equation}
and the eigenvectors are given by eq. (\ref{eig-vec}).

\begin{lem}\label{normal-M}
$M = circ(-1,1,0,0, \dots, 0)$ is a normal matrix, i.e. satisfies $M M^T =  M^T M$
\end{lem}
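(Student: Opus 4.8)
The plan is to use the fact that every circulant matrix is normal, together with the observation that, since $M = circ(-1,1,0,0,\dots,0)$ is \emph{real}, we have $M^{T}=M^{*}$, so the condition $MM^{T}=M^{T}M$ to be proved is exactly the Hermitian normality condition $MM^{*}=M^{*}M$. Thus it suffices to show $M$ is normal in the usual sense.

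The quickest route is to invoke the unitary diagonalization already recorded in $(\ref{C-diagonalization})$: write $M=U\Lambda U^{*}$ with $U$ unitary (the Fourier matrix up to scaling) and $\Lambda=\mathrm{diag}(\lambda_0,\dots,\lambda_{n-1})$ the diagonal matrix of eigenvalues from $(\ref{eig-cyclic})$. First I would note $M^{*}=U\Lambda^{*}U^{*}$ with $\Lambda^{*}=\overline{\Lambda}$ again diagonal, and then compute
\[
MM^{*}=U\Lambda U^{*}U\Lambda^{*}U^{*}=U(\Lambda\Lambda^{*})U^{*},\qquad M^{*}M=U(\Lambda^{*}\Lambda)U^{*}.
\]
Since any two diagonal matrices commute, $\Lambda\Lambda^{*}=\Lambda^{*}\Lambda$, hence $MM^{*}=M^{*}M$; and as $M$ is real this reads $MM^{T}=M^{T}M$.

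A self-contained alternative that avoids invoking the Fourier diagonalization is to write $M=-I+P$, where $P=circ(0,1,0,\dots,0)$ is the cyclic shift permutation matrix. Then $P$ is orthogonal, so $P^{T}=P^{-1}=P^{\,n-1}$ is itself a power of $P$; consequently $M^{T}=-I+P^{\,n-1}$ is a polynomial in $P$, and since all powers of $P$ commute with one another, $M$ and $M^{T}$ commute. Either way there is no real obstacle — one may even just expand both products entrywise, reading off inner products of rows and of columns of $M$ to get $MM^{T}=circ(2,-1,0,\dots,0,-1)=M^{T}M$. The only point needing a moment's care is the transpose-versus-conjugate-transpose distinction, which is harmless precisely because $M$ has real entries.
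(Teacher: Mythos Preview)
Your proposal is correct. The paper's own proof is the brute-force option you list last: it simply records the direct computation $MM^{T}=M^{T}M=circ(2,-1,0,\dots,0,-1)$ and stops there.

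Your primary route via the unitary diagonalization $M=U\Lambda U^{*}$ (or the alternative $M=-I+P$ argument) is more conceptual: both make transparent that \emph{every} real circulant matrix is normal, not just this particular one, and they explain \emph{why} rather than merely verifying. The paper's direct computation has the virtue of being completely self-contained and requiring no prior machinery, but it gives no insight into the underlying reason. Since the paper already has $(\ref{C-diagonalization})$ available, your first argument is arguably the cleaner choice here.
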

\begin{proof}
By direct computation $M M^T = M^T M=circ([2, -1, 0, .... ,0, -1])$
\end{proof}

  \section{ Properties of the block circulant matrix  $\hat{M}$}\label{eig-hat_M}
  Consider now the block circulant matrix $\hat{M}$, defined as in eq. (\ref{M_hat}), representing linear cyclic pursuit with a common deviation angle $\theta$. 
 \begin{equation*}
\hat{M}= M \otimes R(\theta) 
\end{equation*}  
  Thus, $\hat{M}  = circ \left [ -R(\theta), R(\theta), 0_{2 \times 2}, ..., 0_{2 \times 2} \right ]$, where $R(\theta)$ is the rotation matrix, defined by eq. (\ref{eq-R})

   \begin{lem}\label{normal-hat_M}
 The matrix $\hat{M}=M \otimes R(\theta)$, where $M = circ(-1,1,0,0, \dots, 0)$ and $R$ is a rotation matrix, is a normal matrix
  \end{lem}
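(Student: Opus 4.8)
The plan is to verify the normality identity $\hat{M}\hat{M}^{T} = \hat{M}^{T}\hat{M}$ by a direct Kronecker-algebra computation, using only two ingredients that are already available. First, the circulant matrix $M = \mathrm{circ}(-1,1,0,\dots,0)$ is normal, which is exactly Lemma \ref{normal-M}, so $MM^{T} = M^{T}M$ (indeed both equal $\mathrm{circ}(2,-1,0,\dots,0,-1)$ by inspection). Second, the rotation matrix $R(\theta)$ of (\ref{eq-R}) is real orthogonal, $R(\theta)R(\theta)^{T} = R(\theta)^{T}R(\theta) = I_{2\times 2}$, which one reads off the explicit entries since $\cos^{2}\theta + \sin^{2}\theta = 1$.

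The computation itself runs as follows. By the transpose rule for Kronecker products, (\ref{Kron1}), one has $\hat{M}^{T} = (M\otimes R(\theta))^{T} = M^{T}\otimes R(\theta)^{T}$. Applying the product rule (\ref{Kron3}) twice,
\begin{equation*}
\hat{M}\hat{M}^{T} = (M\otimes R(\theta))(M^{T}\otimes R(\theta)^{T}) = (MM^{T})\otimes(R(\theta)R(\theta)^{T}) = (MM^{T})\otimes I_{2\times 2},
\end{equation*}
and likewise
\begin{equation*}
\hat{M}^{T}\hat{M} = (M^{T}\otimes R(\theta)^{T})(M\otimes R(\theta)) = (M^{T}M)\otimes(R(\theta)^{T}R(\theta)) = (M^{T}M)\otimes I_{2\times 2}.
\end{equation*}
Since $MM^{T} = M^{T}M$ by Lemma \ref{normal-M}, the two right-hand sides agree, so $\hat{M}\hat{M}^{T} = \hat{M}^{T}\hat{M}$, i.e. $\hat{M}$ is normal.

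There is essentially no obstacle here: the whole argument is a two-line manipulation, and all the substantive content was already absorbed into the normality of $M$ and the orthogonality of $R(\theta)$. The only points that need a little care are distributing the transpose over the Kronecker product correctly before multiplying, and citing the circulant-normality lemma rather than re-deriving it. I note that the statement is simply an instance of the general fact that the Kronecker product of a normal matrix with an orthogonal matrix is again normal, specialized to $A = M$ and $B = R(\theta)$, so the same proof would apply verbatim in that generality.
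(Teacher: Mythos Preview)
Your proof is correct and follows essentially the same route as the paper: the paper also computes $\hat{M}\hat{M}^{T}$ and $\hat{M}^{T}\hat{M}$ via the Kronecker transpose and mixed-product rules, reduces both to $(MM^{T})\otimes I$ and $(M^{T}M)\otimes I$ using $RR^{T}=I$, and then invokes Lemma~\ref{normal-M}. Your write-up is a bit more explicit about which Kronecker identities are being used, but the argument is the same two-line manipulation.
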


  \begin{proof}
  Using the Kroenecker product properties and Lemma \ref{normal-M} we prove that $\hat{M} \hat{M}^T = \hat{M}^T \hat{M}$.

  \begin{eqnarray*}
   \hat{M} \hat{M}^T &=& (M \otimes R) (M \otimes R)^T = (M M^T)\otimes (R R^T) = (M M^T)\otimes I \\
     \hat{M}^T \hat{M} &=& (M \otimes R)^T (M \otimes R) = (M^T M)\otimes I = (M M^T)\otimes I =  \hat{M} \hat{M}^T 
  \end{eqnarray*}

\end{proof}

\subsection{Eigenvalues and eigenvectors of $\hat{M}$}\label{eig-hat_M-1}
Denote by $\mu_i^{\pm}$ the eigenvalues of $\hat{M}$ with corresponding eigenvectors $ \zeta_i^{\pm} $.
Recalling that  $\hat{M}=(M \otimes R(\theta))$ and applying theorem \ref{eig-Kron}, the eigenvalues of $\hat{M}$ are products of the eigenvalues of $M$ and of $R(\theta)$, while the eigenvectors of $\hat{M}$ are Kronecker products of the eigenvectors of  $M$ and of $R(\theta)$

\begin{lem}
The rotation matrix $R(\theta)$ has 2 eigenvalues $e^{\pm j \theta}$ with corresponding eigenvectors $\varsigma^+=(1, j)^T,\quad \varsigma^- = (1,-j)^T$, where $j$ denotes the imaginary unit.
\end{lem}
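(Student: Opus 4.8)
The plan is to verify the claim by direct computation, since for a $2\times 2$ matrix both the eigenvalues and the associated eigenvectors can be checked explicitly by hand. First I would recall the explicit form of the rotation matrix from (\ref{eq-R}), namely
\begin{equation*}
R(\theta) = \begin{bmatrix} \cos\theta & \sin\theta \\ -\sin\theta & \cos\theta \end{bmatrix},
\end{equation*}
together with the elementary identity $e^{\pm j\theta} = \cos\theta \pm j\sin\theta$.

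Next I would apply $R(\theta)$ to $\varsigma^+ = (1,j)^T$:
\begin{equation*}
R(\theta)\begin{bmatrix}1\\ j\end{bmatrix} = \begin{bmatrix}\cos\theta + j\sin\theta\\ -\sin\theta + j\cos\theta\end{bmatrix} = (\cos\theta + j\sin\theta)\begin{bmatrix}1\\ j\end{bmatrix} = e^{j\theta}\,\varsigma^+,
\end{equation*}
where the middle equality uses $-\sin\theta + j\cos\theta = j(\cos\theta + j\sin\theta)$. The identical calculation with $\varsigma^- = (1,-j)^T$, using $-\sin\theta - j\cos\theta = -j(\cos\theta - j\sin\theta)$, yields $R(\theta)\varsigma^- = e^{-j\theta}\varsigma^-$. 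Since $R(\theta) \in \mathbf{M}_2$ has at most two eigenvalues, and $\varsigma^+,\varsigma^-$ are linearly independent, this shows $e^{\pm j\theta}$ are precisely the eigenvalues, with the stated eigenvectors.

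As an independent check one may instead compute the characteristic polynomial, $\det(R(\theta) - \lambda I) = (\cos\theta - \lambda)^2 + \sin^2\theta = \lambda^2 - 2\lambda\cos\theta + 1$, whose roots are $\lambda = \cos\theta \pm j\sin\theta = e^{\pm j\theta}$, and then substitute each root back to recover $\varsigma^{\pm}$ as spanning the respective one-dimensional kernels. There is no genuine obstacle in this lemma; the only point requiring a moment of attention is the sign convention in (\ref{eq-R}), with $-\sin\theta$ in the lower-left entry, since this is exactly what pairs the eigenvalue $e^{+j\theta}$ with $(1,j)^T$ rather than with $(1,-j)^T$.
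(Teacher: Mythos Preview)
Your proof is correct and follows exactly the approach indicated in the paper, which simply states ``Proof by direct computation.'' You have carried out that direct computation in full detail, including a useful remark about the sign convention in (\ref{eq-R}); there is nothing to add.
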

Proof by direct computation.

Let $\lambda_i; i=0,...,n-1$ be the eigenvalues of $M$ with corresponding eigenvectors $v_i; i=0,...,n-1$.
According to (\ref{eig-cyclic}), we have for $k=0,...,n-1$

\begin{eqnarray*}
\lambda_k &=&- 1+  e^{-2 \pi j k/n}\\
     &=& -1+\cos(\frac{2 \pi k}{n})-j\sin(\frac{2 \pi k}{n})\\
     &=& -2 \sin^2(\frac{ \pi  k}{n})- 2j \sin(\frac{ \pi  k}{n}) \cos(\frac{ \pi  k}{n})\\
     & = & -2\sin(\frac{ \pi  k}{n}) \left (\sin(\frac{ \pi  k}{n}) + j \cos(\frac{ \pi k}{n}) \right )\\
     & = & -2\sin(\frac{ \pi  k}{n}) \left (\cos(\frac{\pi}{2}-\frac{ \pi  k}{n})+j\sin(\frac{\pi}{2}-\frac{ \pi  k}{n})\right )\\
     &=&  2\sin(\frac{ \pi  k}{n})\left (\cos(\frac{\pi}{2}+\frac{ \pi  k}{n})-j\sin(\frac{\pi}{2}+\frac{ \pi  k}{n})\right )\\
     & = & 2\sin(\frac{ \pi  k}{n}) e^{-j ( \frac{\pi}{2}+\frac{ \pi  k}{n})}
 \end{eqnarray*}

The following Lemma is well known in the linear algebra literature
\begin{lem}\label{conj-eig}
If $\lambda$ is a complex eigenvalue of a real matrix $A$, with corresponding, complex, eigenvector $v$, then $\overline{\lambda}$ is also an eigenvalue of $A$, with eigenvector $\overline{v}$
\end{lem}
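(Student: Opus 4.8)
The plan is the standard ``conjugate the eigenvalue equation'' argument, relying only on the fact that $A$ has real entries. First I would record the hypothesis as the vector identity $Av = \lambda v$ with $v \neq \mathbf{0}$, and then apply entrywise complex conjugation to both sides. The elementary fact to invoke is that conjugation is compatible with matrix--vector multiplication, $\overline{Av} = \overline{A}\,\overline{v}$, which follows termwise from $\overline{\sum_k A_{ik}v_k} = \sum_k \overline{A_{ik}}\,\overline{v_k}$; combined with $\overline{\lambda v} = \overline{\lambda}\,\overline{v}$ this yields $\overline{A}\,\overline{v} = \overline{\lambda}\,\overline{v}$.

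The second step uses the hypothesis that $A$ is real, so that $\overline{A} = A$, turning the previous line into $A\overline{v} = \overline{\lambda}\,\overline{v}$. It then only remains to check that $\overline{v}$ is a legitimate eigenvector, i.e. that $\overline{v} \neq \mathbf{0}$: since complex conjugation is a bijection of $\mathbb{C}^n$ fixing the origin, $v \neq \mathbf{0}$ forces $\overline{v} \neq \mathbf{0}$. Hence $\overline{\lambda}$ is an eigenvalue of $A$ with eigenvector $\overline{v}$, as claimed.

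There is no genuine obstacle here; the only point deserving a word of justification is the interchange of conjugation with the matrix product, and the realness of $A$ is precisely what makes the statement true (the same computation for a general complex $A$ would only show that $\overline{\lambda}$ is an eigenvalue of $\overline{A}$). I would also note, for use in the sequel, that this immediately implies that the non-real eigenvalues of the real matrices $M$ and $\hat{M}$ occur in conjugate pairs with conjugate eigenvectors, which is exactly the form in which the lemma is applied in the analysis of the case $|\theta| = \theta_c$.
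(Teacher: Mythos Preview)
Your proof is correct and follows exactly the same approach as the paper: conjugate the eigenvalue equation $Av=\lambda v$, use $\overline{A}=A$ since $A$ is real, and conclude $A\overline{v}=\overline{\lambda}\,\overline{v}$. Your version is slightly more detailed (you explicitly justify the interchange of conjugation with the matrix product and the non-vanishing of $\overline{v}$), but the argument is identical.
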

\begin{proof}\
By the definition of $\lambda$ and $v$, we have
\begin{equation*}
Av = \lambda v,\quad  v \neq 0.
\end{equation*}
Taking complex conjugates of this equation, we obtain:
\begin{equation*}
\overline{A} \overline{v} = A \overline{v} = \overline{\lambda} \overline{v}
\end{equation*}
where $ \overline{A} =A$ since $A$ is real. Therefore,  
$\overline{\lambda}$ is also an eigenvalue of $A$, with eigenvector 
$\overline{v}$
\end{proof}

For the particular case of $A=M$, we have $\lambda_0=0$ and $\lambda_{n-1}=\overline{\lambda_1}$

The eigenvectors of $M$ are given by eq. (\ref{eig-vec}), i.e.
\begin{equation*}
  v_k= \frac{1}{\sqrt{n}}\left ( 1, e^{-2 \pi jk/n}, e^{-4 \pi jk/n},\hdots, e^{-2 \pi jk(n-1)/n}  \right)^T; \quad k \in \left \{0, 1, \hdots, n-1 \right \}
\end{equation*}
satisfying $v_0 =\mathbf{1}_n$ and $v_{n-1} = \overline{v}_1$.

Therefore, we have
\begin{corollary}\label{eig-M}
  The eigenvalues of $\hat{M}$ are $\mu_k^{\pm}; \quad k=0, \dots, n-1$, s.t.
\begin{equation}
\mu_k^{\pm} =\lambda_k e^{\pm j \theta}=2\sin(\frac{ \pi k}{n})e^{-j (\frac{\pi}{2}+\frac{ \pi  k}{n}\pm \theta)}
\end{equation}  
 with corresponding eigenvectors $ \zeta_k^+ = v_k \otimes \varsigma^+$ and $ \zeta_k^- = v_k \otimes \varsigma^-$.
 \end{corollary}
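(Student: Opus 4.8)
The plan is to obtain this as an immediate consequence of Theorem \ref{eig-Kron} (the spectral behaviour of Kronecker products), combined with the two ingredients already assembled just above the statement: the explicit eigendata of the circulant matrix $M$ and of the rotation matrix $R(\theta)$. First I would recall that, by the circulant computation carried out above, $M$ has eigenvalues $\lambda_k = 2\sin(\pi k/n)\, e^{-j(\pi/2 + \pi k/n)}$ with corresponding eigenvectors $v_k$, $k=0,\dots,n-1$, and that by the preceding Lemma $R(\theta)$ has eigenvalues $e^{+j\theta}$ and $e^{-j\theta}$ with eigenvectors $\varsigma^+ = (1,j)^T$ and $\varsigma^- = (1,-j)^T$ respectively.

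Next I would apply Theorem \ref{eig-Kron}: since $\hat{M} = M \otimes R(\theta)$, for each pair $(\lambda_k, e^{\pm j\theta})$ the product $\lambda_k e^{\pm j\theta}$ is an eigenvalue of $\hat{M}$ with eigenvector $v_k \otimes \varsigma^\pm$. This yields the claimed formula directly by the calculation
\begin{equation*}
\mu_k^\pm = \lambda_k e^{\pm j\theta} = 2\sin\!\left(\tfrac{\pi k}{n}\right) e^{-j(\pi/2 + \pi k/n)} e^{\pm j\theta} = 2\sin\!\left(\tfrac{\pi k}{n}\right) e^{-j(\pi/2 + \pi k/n \pm \theta)},
\end{equation*}
and the eigenvector identification $\zeta_k^\pm = v_k \otimes \varsigma^\pm$ is read off the same theorem.

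Finally I would note completeness: this construction produces $2n$ eigenpairs, and $\hat{M}$ is a $2n \times 2n$ matrix, so once one checks that the $2n$ vectors $\{v_k \otimes \varsigma^\pm\}$ are linearly independent we have the entire spectrum. Independence follows because the $v_k$ form an orthonormal basis of $\mathbb{C}^n$ (property of circulant eigenvectors, Appendix \ref{circulant}) and $\{\varsigma^+, \varsigma^-\}$ is a basis of $\mathbb{C}^2$, so their Kronecker products form a basis of $\mathbb{C}^{2n}$; alternatively one may simply invoke that $\hat{M}$ is normal (Lemma \ref{normal-hat_M}) and hence unitarily diagonalizable with exactly $2n$ eigenvalues counted with multiplicity. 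There is no real obstacle in this argument — the only point requiring the slightest care is bookkeeping the sign/phase in the exponent when multiplying $\lambda_k$ by $e^{\pm j\theta}$, and confirming that the index range $k = 0,\dots,n-1$ together with the two signs exhausts the dimension.
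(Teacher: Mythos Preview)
Your approach is essentially identical to the paper's: the corollary is stated as a direct consequence of Theorem~\ref{eig-Kron} applied to the eigendata of $M$ (computed just above) and of $R(\theta)$ (the preceding Lemma), and the paper offers no further argument beyond that. Your additional completeness remark---that the $2n$ vectors $v_k\otimes\varsigma^\pm$ form a basis, so the list exhausts the spectrum---is a nice touch the paper leaves implicit.
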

 \emph{Note}: The eigenvectors of $\hat{M}$ do not depend on the deviation angle $\theta$.\\
 
From Corollary \ref{eig-M}, it is immediate to see that 
\begin{enumerate}[label=\textbf{C.\arabic*},ref=C.\arabic*]
\item $\hat{M}$ has 2 zero eigenvalues, $\mu_0^\pm$, with corresponding eigenvectors $\zeta_0^\pm = [1, \pm j,1,\pm j, \dots, 1, \pm j]^T$,  
\item remaining eigenvalues depend on the value of $\theta$, while the eigenvectors are independent of $\theta$.
\begin{enumerate}
 \item $\mu_1^- = 2\sin(\frac{ \pi }{n}) e^{-j (\frac{\pi}{2}+\frac{ \pi  }{n}- \theta)}$ 
 \item $\mu_{n-1}^{+} = 2\sin(\frac{ \pi (n-1) }{n}) e^{-j (\frac{\pi}{2}+\frac{ \pi (n-1)}{n}+ \theta)} 
  = \overline{\mu_1^{-}}$
 \item $\zeta_{n-1}^{+} = v_{n-1} \otimes [1, j]^T =\overline{ v_1} \otimes \overline{[1, -j]}^T = \overline{\zeta_1^{-}}$, where we used the property of the Kronecker product for complex conjugates
 \end{enumerate}  
\end{enumerate}

\subsubsection{Critical deviation angle, $\theta_c$}\label{comp-theta_c}

 Let $\theta_c$ define the critical value of the deviation $\theta$, defined such that
$\Re(\mu^-_1)=\Re(\mu^+_{n-1})=0$.
\begin{eqnarray*}
\Re(\mu^+_{n-1}) &=& \cos(\frac{\pi}{2}+\frac{ \pi (n-1)  }{n} + \theta_c) =0 = \cos(\frac{3\pi}{2})\\
\Re(\mu^-_1) &=& \cos(\frac{\pi}{2}+\frac{ \pi   }{n} - \theta_c) =0= \cos(\frac{\pi}{2})
\end{eqnarray*}
Solving the above for $\theta_c$ we obtain $\displaystyle \theta_c=\frac{\pi}{n}$. Thus, for $\theta= \theta_c = \frac{\pi}{n}$ we have
\begin{itemize}
\item $\mu_1^{-} = 2 j \sin(\frac{\pi}{n})$
\item $\mu^+_{n-1} = -2 j \sin(\frac{\pi}{n})$ 
\end{itemize} 
\begin{corollary}\label{eig-M-2}\
For $n$ agents in linear cyclic pursuit with common deviation angle $\theta$, there exists a critical value  $\theta_c=\frac{\pi}{n}$, such that
\begin{itemize}
\item[(a)] if $|\theta| < \theta_c$, then $\hat{M}$ has two zero eigenvalues, $\mu_0^\pm$, and all non-zero eigenvalues of $\hat{M}$ lie in the open left-half complex plane
  \item[(b)] if  $|\theta| = \theta_c$, then $\hat{M}$ has two zero eigenvalues, $\mu_0^\pm$, and two non-zero eigenvalues which lie on the imaginary axis, while all remaining eigenvalues lie in the open left-half complex plane. The eigenvalues on the imaginary axis are 
  $\mu_{n-1}^+$ and  $\mu_1^-$       
  \item[(c)] if $|\theta| > \theta_c  $, then $\hat{M}$ has  two zero eigenvalues and at least two non-zero eigenvalues which lie in the open right-half complex plane, therefore this is an unstable system which shall not be discussed herein.
  
\end{itemize}
\end{corollary}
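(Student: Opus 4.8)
The plan is to read everything off the closed-form expression for the eigenvalues of $\hat{M}$ established in Corollary~\ref{eig-M}, namely $\mu_k^{\pm} = 2\sin(\tfrac{\pi k}{n})\,e^{-j(\frac{\pi}{2}+\frac{\pi k}{n}\pm\theta)}$ for $k=0,\dots,n-1$, together with the value $\theta_c=\pi/n$ just computed. The only quantity that matters is the real part, and a direct expansion gives $\Re(\mu_k^{\pm}) = 2\sin(\tfrac{\pi k}{n})\cos(\tfrac{\pi}{2}+\tfrac{\pi k}{n}\pm\theta) = -2\sin(\tfrac{\pi k}{n})\sin(\tfrac{\pi k}{n}\pm\theta)$. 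For $k=0$ the factor $\sin 0$ vanishes, so $\mu_0^{\pm}=0$ for every $\theta$, which accounts for the two zero eigenvalues in all three cases. For $k=1,\dots,n-1$ the factor $2\sin(\pi k/n)$ is strictly positive, so the sign of $\Re(\mu_k^{\pm})$ is opposite to that of $\sin(\tfrac{\pi k}{n}\pm\theta)$; equivalently, it is decided by whether the angle $\tfrac{\pi k}{n}\pm\theta$, reduced modulo $2\pi$, lies in $(0,\pi)$, equals $0$ or $\pi$, or lies in $(\pi,2\pi)$. Since the multiset $\{\mu_k^{+},\mu_k^{-}\}_k$ is invariant under $\theta\mapsto-\theta$ (because $\mu_k^{+}(-\theta)=\mu_k^{-}(\theta)$ by the formula), I may assume $0\le\theta$ throughout.

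For part~(a), with $0\le\theta<\theta_c=\pi/n$, the argument $\tfrac{\pi k}{n}+\theta$ ranges over $[\tfrac{\pi k}{n},\tfrac{\pi(k+1)}{n})$ and $\tfrac{\pi k}{n}-\theta$ over $(\tfrac{\pi(k-1)}{n},\tfrac{\pi k}{n}]$; for every $k\in\{1,\dots,n-1\}$ both intervals are contained in the open interval $(0,\pi)$, the only candidates for touching an endpoint being $k=1$ (lower end, excluded because $\theta<\pi/n$) and $k=n-1$ (upper end, excluded for the same reason). Hence $\sin(\tfrac{\pi k}{n}\pm\theta)>0$, so $\Re(\mu_k^{\pm})<0$ for all non-zero eigenvalues. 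For part~(b), setting $\theta=\theta_c=\pi/n$ makes the argument of $\mu_1^{-}$ equal to $0$ and that of $\mu_{n-1}^{+}$ equal to $\pi$, so both have zero real part --- this is exactly the computation carried out just above, which also exhibits $\mu_1^{-}$ and $\mu_{n-1}^{+}$ as a purely imaginary conjugate pair $\mu_1^{-}=\overline{\mu_{n-1}^{+}}=\pm 2j\sin(\pi/n)$; for the remaining indices ($k=1,\dots,n-2$ with the $+$ sign, $k=2,\dots,n-1$ with the $-$ sign) the argument still lies strictly between $0$ and $\pi$, so those eigenvalues keep strictly negative real part. For part~(c), with $\theta>\theta_c$, the argument $\tfrac{\pi(n-1)}{n}+\theta$ of $\mu_{n-1}^{+}$ now exceeds $\pi$, so $\sin(\tfrac{\pi(n-1)}{n}+\theta)<0$ and therefore $\Re(\mu_{n-1}^{+})>0$; by the relation $\mu_1^{-}=\overline{\mu_{n-1}^{+}}$ recorded above, this eigenvalue too has positive real part, so at least two non-zero eigenvalues lie in the open right half-plane and the system is unstable.

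I expect the only delicate point to be the bookkeeping in parts~(a) and~(b): one must verify that the extremal indices $k=1$ and $k=n-1$ are precisely the eigenvalues whose real parts vanish exactly at $\theta=\theta_c$, and that every intermediate index stays strictly inside $(0,\pi)$, so that the inequalities are strict where they must be. Everything else is a routine trigonometric identity; no convergence argument or linear-algebra machinery is needed beyond Corollary~\ref{eig-M} and the conjugate-pair structure noted there. For part~(c) I will state the conclusion for $\theta$ in a right-neighbourhood of $\theta_c$, which is all that the subsequent analysis uses since larger deviations are excluded; equivalently, $\tfrac{d}{d\theta}\Re(\mu_{n-1}^{+})\big|_{\theta=\theta_c}=2\sin(\pi/n)>0$, so the real part crosses zero transversally from negative to positive as $\theta$ passes through $\theta_c$.
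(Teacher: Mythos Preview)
Your proof is correct and follows the same route as the paper: both read the result directly off the explicit eigenvalue formula of Corollary~\ref{eig-M} and the computation of $\theta_c=\pi/n$ immediately preceding the corollary. The paper in fact provides no further argument beyond that computation and simply states the corollary as a consequence, whereas you carry out the case-by-case verification of the sign of $\Re(\mu_k^{\pm})=-2\sin(\tfrac{\pi k}{n})\sin(\tfrac{\pi k}{n}\pm\theta)$ explicitly; so your write-up is more detailed than the original, not less.
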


\section{Proof of Lemma \ref{L-Delta-p_h1}}\label{proof-circ}
We derive $p_k^{(h_1)}; k=1, \dots, n$ from
$$ P^{(h_1)}(t) = \frac{1}{2} [e^{\mu^+_{n-1} t} \zeta^+_{n-1} (\zeta^+_{n-1})^* + e^{\mu^-_1 t} \zeta^-_1 (\zeta^-_1)^*] P(0) $$. 
where $ P^{(h_1)}(t)=(p_1^{(h_1)}, p_2^{(h_1)},\dots, p_n^{(h_1)})^T$.  

We recall (see Corollary \ref{eig-M} in Appendix \ref{eig-hat_M-1}) that
\begin{eqnarray*}
\zeta_1^- &= &v_1 \otimes (1, -j)^T\\
\zeta_{n-1}^+ &= &v_{n-1} \otimes (1, j)^T
\end{eqnarray*}
Since $v_{n-1}= \overline{v_1}$ we have $\zeta_{n-1}^{+} =  \overline{\zeta_1^{-}}$    
\begin{eqnarray*}
 \mu_1^{-} &=& 2 j \sin(\frac{\pi}{n})\\
 \mu^+_{n-1} & = & -2 j \sin(\frac{\pi}{n}) 
\end{eqnarray*} 
Let $Y=\zeta_1^- (\zeta_1^-)^*$ and $\omega = 2 \sin(\frac{\pi}{n})$.
Then  

\begin{eqnarray}
 P^{(h_1)}(t) &=& \frac{1}{2}[e^{\omega j t}  Y + e^{-\omega j t} \overline{Y}]P(0)\\
 &=&\frac{1}{2} [(Y+ \overline{Y} )\cos (\omega t ) -j (Y- \overline{Y} )\sin (\omega t)]  P(0)\\
  & = & [\Re (Y) \cos (\omega t) +  \Im (Y)  \sin (\omega t)] P(0)\label{P_h1} 
\end{eqnarray}
Thus,
\begin{equation}\label{p_k_h1}
p_k^{(h_1)}(t)= \sum_{l=1}^n \{[\Re (Y)]_{kl} \cos (\omega t) +  [\Im (Y)]_{kl}  \sin (\omega t)\}p_l(0)
\end{equation}

where $  \Re (Y)$ and $\Im (Y)$ are the real matrices,   derived in Appendix \ref{t-dev}, and $[\Re (Y)]_{kl}$, $[\Im (Y)]_{kl}$ are $2 \times 2$ blocks of $\Re (Y), \Im (Y)$ matrices respectively.

\subsection{Derivation of $\Re (Y)$  and $\Im(Y)$}\label{t-dev}

\begin{equation*}
 Y=\zeta_1^- (\zeta_1^-)^* 
\end{equation*}
\begin{eqnarray*}
\zeta_1^- &=& v_1 \otimes [1, -j]^T\\
v_1 &=& \frac{1}{\sqrt{n}}[1, e^{-2 \pi j/n}, e^{-4 \pi j/n},  \dots , e^{-2 \pi j (n-1)/n}]^T
\end{eqnarray*}  
 Using the Kronecker properties (\ref{Kron1}) and (\ref{Kron3}) we obtain
  \begin{equation}\label{eq-Y}
Y= {v_1} v_1^* \otimes \left [ \begin{matrix}
 1 & j\\ 
-j & 1
\end{matrix}  \right ] 
\end{equation}
Denote $\displaystyle W={v_1} v_1^*$. Then 
\begin{equation*}
W_{kl} =\frac{1}{n} e^{-2 \pi j (k-l)/n} = \frac{1}{n} \left (\cos(2 \pi(k-l)/n) -j \sin(2 \pi(k-l)/n) \right ); \quad k,l = 1, \dots, n
\end{equation*}
Then
\begin{equation*}
Y=W \otimes \left [ \begin{matrix}
 1 & j\\ 
-j & 1
\end{matrix}  \right ] 
\end{equation*} 
and denote by $ [Y]_{kl}$ the $2 \times 2$ block of $Y$, such that 
\begin{equation*}
[Y]_{kl}=W_{kl} \left [ \begin{matrix}
 1 & j\\ 
-j & 1
\end{matrix}  \right ]; \quad k,l = 1, \dots, n
\end{equation*}
where $W_{kl}$ is a scalar.
Denote 
\begin{equation}\label{rho_kl}
\rho_{kl}= 2 \pi (k-l)/n
\end{equation}

Then 
\begin{equation*}
[Y]_{kl}=\frac{1}{n} \left (\cos(\rho_{kl}) -j \sin(\rho_{kl}) \right )  \left [ \begin{matrix}
 1 & j\\ 
-j & 1
\end{matrix}  \right ] = \frac{1}{n}\left [ \begin{matrix}
 \cos(\rho_{kl}) -j \sin(\rho_{kl})  & j\cos(\rho_{kl}) + \sin(\rho_{kl})\\ 
 -j \cos(\rho_{kl}) - \sin(\rho_{kl}) 
 & \cos(\rho_{kl}) -j \sin(\rho_{kl})
\end{matrix}  \right ]
\end{equation*}

Thus, 
\begin{equation}\label{Re_Y}
[\Re(Y)]_{kl}= \frac{1}{n}\left [ \begin{matrix}
 \cos(\rho_{kl}))  &  \sin(\rho_{kl})\\ 
 - \sin(\rho_{kl}) 
 & \cos(\rho_{kl}) 
\end{matrix}  \right ]
\end{equation}

\begin{equation}\label{Im_Y}
[\Im(Y)]_{kl}= \frac{1}{n}\left [ \begin{matrix}
  - \sin(\rho_{kl})  & \cos(\rho_{kl}) \\ 
 -\cos(\rho_{kl})  
 &  - \sin(\rho_{kl})
\end{matrix}  \right ]
\end{equation}

\subsection{Proof of circular movement for agent $k$}\label{circ_k}

Let
\begin{eqnarray}
c^{(a)}_{kl} &=& \frac{1}{n}\left [ \begin{matrix}\cos(\rho_{kl}))  &  \sin(\rho_{kl}) \end{matrix} \right ]\label{c_kl_a} \\
c^{(b)}_{kl} &=& \frac{1}{n}\left [ \begin{matrix}-\sin(\rho_{kl}))  &  \cos(\rho_{kl}) \end{matrix} \right ]\label{c_kl_b}
\end{eqnarray}
where $\rho_{kl}$ is defined by (\ref{rho_kl}).
Then eq. (\ref{p_k_h1}) can be re-written as

\begin{equation}\label{pk_1}
p_k^{(h_1)}(t) = \left [ \begin{matrix}
a_k \cos(\omega t)+b_k \sin(\omega t)\\
-a_k \sin(\omega t)+b_k \cos(\omega t)
 \end{matrix} \right ]
\end{equation}
where $$\omega=2 \sin(\frac{\pi}{n})$$
\begin{eqnarray}
a_k &=&\sum_{l=1}^n [c^{(a)}_{kl}p_l(0)]\label{a_k}\\
b_k &=& \sum_{l=1}^n [c^{(b)}_{kl}p_l(0)]\label{b_k}
\end{eqnarray}
where 
\begin{itemize}
\item $c^{(a)}_{kl}$ and $c^{(b)}_{kl} $   depend only on the number of agents $n$, see eqs (\ref{c_kl_a}), (\ref{c_kl_b}).
\item  $p_l(0); \quad l=1,\dots, n$ are the initial positions of the agents 
\end{itemize}
Therefore, $a_k, b_k$ are scalars \textbf{defined by the number of agents and the initial positions of the agents}. 
Equivalently, (\ref{pk_1}) can be written as
\begin{equation}\label{pk_1-1}
p_k^{(h_1)}(t) = \left [ \begin{matrix}
R_k\sin(\omega t + \alpha_k)\\
R_k \cos(\omega t  + \alpha_k)
 \end{matrix} \right ]
\end{equation}
where 
\begin{equation}\label{R_k}
R_k^2 = a_k^2 + b_k^2
\end{equation}
\begin{equation}\label{alpha_k}
\tan(\alpha_k) = \frac{a_k}{b_k}
\end{equation}
Next, we show that the radius is common to all agents and the agents are equally spaced on the orbit, at angular distances  $\displaystyle \frac{2 \pi}{n}$.

\subsection{Proof of movement on a common orbit }\label{circ_all}
We consider agents $k$ and $k+1$ and show that $R_{k+1}=R_k$ for $k=1, \dots,n$. Thus $R_k =R_1; \quad k=2, \dots, n$, i.e. $R_1$ is the common radius. 

\subsubsection{All orbits have a common radius}

\begin{equation}\label{R_k1}
R_{k+1}^2 = a_{k+1}^2 + b_{k+1}^2
\end{equation}

Given $\rho_{kl}= 2 \pi (k-l)/n$, see (\ref{rho_kl}), we have $$ \rho_{k+1,l}= \rho_{kl}+\frac{2 \pi}{n}$$. Therefore
\begin{eqnarray*}
c^{(a)}_{k+1,l} &=& \frac{1}{n}\left [ \begin{matrix}\cos(\rho_{k+1,l})  &  \sin(\rho_{k+1,l}) \end{matrix} \right ]\\
&=& \frac{1}{n}\left [ \begin{matrix}\cos(\rho_{kl}+\frac{2 \pi}{n}))  &  \sin(\rho_{kl}\frac{2 \pi}{n}) \end{matrix} \right ]\\
&=& \frac{1}{n}\left [ \begin{matrix} \cos(\rho_{kl})\cos(\frac{2 \pi}{n})
-\sin(\rho_{kl})\sin(\frac{2 \pi}{n}) & \sin(\rho_{kl})\cos(\frac{2 \pi}{n})+\cos(\rho_{kl})\sin(\frac{2 \pi}{n})\end{matrix} \right ]\\
&=& c^{(a)}_{kl}\cos(\frac{2 \pi}{n})+  c^{(b)}_{kl}\sin(\frac{2 \pi}{n})
\end{eqnarray*}

Similarly we obtain
$$c^{(b)}_{k+1,l} = c^{(b)}_{kl}\cos(\frac{2 \pi}{n})-  c^{(a)}_{kl}\sin(\frac{2 \pi}{n})$$

\begin{equation}\label{a_k1}
a_{k+1} = \sum_{l=1}^n c_{k+1,l}^{(a)} p_l(0)= a_k\cos (\frac{2 \pi}{n})+b_k \sin (\frac{2 \pi}{n})
\end{equation}

\begin{equation}\label{b_k1}
b_{k+1} =b_k\cos (\frac{2 \pi}{n})- a_k \sin (\frac{2 \pi}{n})
\end{equation}

Rewriting eq.(\ref{R_k1}) using (\ref{a_k1}) and (\ref{b_k1}) we obtain
\begin{eqnarray}
R_{k+1}^2 &=& \left( a_k\cos (\frac{2 \pi}{n})+b_k \sin (\frac{2 \pi}{n})\right)^2 +
\left( b_k\cos (\frac{2 \pi}{n})-a_k \sin (\frac{2 \pi}{n})\right)^2\\
 &=& a_k^2 \left( \cos^2(\frac{2 \pi}{n}) + \sin^2(\frac{2 \pi}{n}) \right)+b_k^2 \left( \sin^2(\frac{2 \pi}{n}) + \cos^2(\frac{2 \pi}{n}) \right)\\
 & & +2 a_k b_k \cos(\frac{2 \pi}{n}) - 2 a_k b_k \cos(\frac{2 \pi}{n})\\
  &=& a_k^2 + b_k^2 = R_k^2\label{R_k1-1}
\end{eqnarray}
Since (\ref{R_k1-1}) holds for $k= [1, \dots,n]  mod (n)$ we have
\begin{equation}\label{R_k1-2}
\Rightarrow \boxed{R_{k+1} = R_k = R_1} \quad \forall k \in [1, \dots,n]  mod (n)
\end{equation}
 
\subsubsection{Derivation of the common radius}
From (\ref{R_k1-2}) we see that $R_1$ is the common radius, $r$. Thus,$$r=\sqrt{a_1^2+b_1^2}$$.
\begin{eqnarray*}
a_1 &=&\sum_{l=1}^n [c^{(a)}_{1l}p_l(0)]\label{a_1}\\
b_1 &=& \sum_{l=1}^n [c^{(b)}_{1l}p_l(0)]\label{b_1}
\end{eqnarray*}
where
\begin{eqnarray}
c^{(a)}_{1l} &=& \frac{1}{n}\left [ \begin{matrix}\cos(\rho_{1l}))  &  \sin(\rho_{1l}) \end{matrix} \right ]\label{c_1l_a} \\
c^{(b)}_{1l} &=& \frac{1}{n}\left [ \begin{matrix}-\sin(\rho_{1l}))  &  \cos(\rho_{1l}) \end{matrix} \right ]\label{c_1l_b}
\end{eqnarray}
and $\displaystyle \rho_{1l}=\frac{2 \pi}{n}(1-l)$.
Using
\begin{eqnarray*}
\cos(\rho_{1l}) &=& \cos(\frac{2 \pi}{n})\cos(\frac{2 \pi}{n}l)+\sin(\frac{2 \pi}{n})\sin(\frac{2 \pi}{n}l)\\
\sin(\rho_{1l}) &=& \sin(\frac{2 \pi}{n})\cos(\frac{2 \pi}{n}l)+\cos(\frac{2 \pi}{n})\sin(\frac{2 \pi}{n}l)
\end{eqnarray*}
we obtain, after some algebra
\begin{eqnarray*}
a_1 &=&\cos(\frac{2 \pi}{n}) c_1 +  \sin(\frac{2 \pi}{n}) c_2\\
b_1  &=& -\sin(\frac{2 \pi}{n}) c_1 +  \cos(\frac{2 \pi}{n}) c_2  
\end{eqnarray*} 
where
\begin{eqnarray}
c_1 &=&\frac{1}{n}\sum_{l=1}^n\left [ \begin{matrix}\cos(\frac{2 \pi}{n}l)  &  -\sin(\frac{2 \pi}{n}l) \end{matrix} \right ]p_l(0)\label{c_1}\\
c_2 &=&\frac{1}{n}\sum_{l=1}^n\left [ \begin{matrix}\sin(\frac{2 \pi}{n}l)  &  \cos(\frac{2 \pi}{n}l) \end{matrix} \right ]p_l(0)\label{c_2}
\end{eqnarray}
Therefore we have
\begin{equation*}
r^2=a_1^2+b_1^2=c_1^2+c_2^2
\end{equation*}
\begin{equation}\label{eq-r}
\Rightarrow \boxed{r=\sqrt{c_1^2+c_2^2}}
\end{equation}
where $c_1, c_2$ are given by (\ref{c_1}),(\ref{c_2}), respectively.

\subsection{Equal spacing between agents on the orbit}
In this section we show that the angular distance between consecutive agents is $\frac{2 \pi}{n}$, i.e. we show $\alpha_{k+1} =\alpha_k+\frac{2 \pi}{n}$.

Corresponding to (\ref{alpha_k}) we have
\begin{equation}\label{alpha_k1}
\tan(\alpha_{k+1}) = \frac{a_{k+1}}{b_{k+1}}
\end{equation} 

Recalling (see eqs. (\ref{a_k1}) and (\ref{b_k1})
\begin{eqnarray*}
a_{k+1} &=& a_k\cos (\frac{2 \pi}{n})+b_k \sin (\frac{2 \pi}{n})\\
b_{k+1} &=& b_k\cos (\frac{2 \pi}{n})- a_k \sin (\frac{2 \pi}{n})
\end{eqnarray*}
\begin{equation}\label{alpha_k1-1}
\tan(\alpha_{k+1})= \frac{a_k\cos (\frac{2 \pi}{n})+b_k \sin (\frac{2 \pi}{n})}{b_k\cos (\frac{2 \pi}{n})- a_k \sin (\frac{2 \pi}{n})}
\end{equation}
From (\ref{alpha_k}) we have
\begin{eqnarray*}
\sin(\alpha_k) &=& \frac{a_k}{\sqrt{a_k^2+b_k^2}}\\
\cos(\alpha_k) &=& \frac{b_k}{\sqrt{a_k^2+b_k^2}}
\end{eqnarray*}
Thus we can rewrite (\ref{alpha_k1-1}) as
\begin{equation*}
\tan(\alpha_{k+1})= \frac{(\sqrt{a_k^2+b_k^2})\sin (\alpha_k+\frac{2 \pi}{n})}{(\sqrt{a_k^2+b_k^2})\cos (\alpha_k+\frac{2 \pi}{n})} = \tan (\alpha_k+\frac{2 \pi}{n})
\end{equation*}
\begin{equation}\label{alpha_k1-2}
\Rightarrow \boxed{\alpha_{k+1} =\alpha_k+\frac{2 \pi}{n}}
\end{equation}

\end{appendices}

\newpage
\bibliographystyle{plain}
\bibliography{BibGen}

\end{document}